\newcommand\sign{\text{sign}}
\newcommand\tp{\text{TP}}
\newcommand\tn{\text{TN}}
\newcommand\fp{\text{FP}}
\newcommand\fn{\text{FN}}
\newcommand\conf{\mathbf{C}}
\newcommand\tph{\widehat{\text{TP}}}
\newcommand\tnh{\widehat{\text{TN}}}
\newcommand\fph{\widehat{\text{FP}}}
\newcommand\fnh{\widehat{\text{FN}}}
\newcommand\tprh{\widehat{\text{TPR}}}
\newcommand\tnrh{\widehat{\text{TNR}}}
\newcommand\prech{\widehat{\text{Prec}}}
\newcommand\confh{\widehat{\mathbf{C}}}
\newcommand{\convinp}{\overset{p}{\to}}
\newcommand\bbP{\mathbb{P}}
\newcommand\cP{\mathcal{P}}
\newcommand\bbR{\mathbb{R}}
\newcommand\s{\mathbf{s}}
\newcommand\y{\mathbf{y}}
\newcommand\x{\mathbf{x}}
\newcommand\z{\mathbf{z}}
\newcommand\X{\mathcal{X}}
\newcommand\Y{\{0, 1\}}
\newcommand\E{\mathbf{E}}
\newcommand\py{p(\y)}
\newcommand\vs{v(\s)}
\newcommand{\MET}{\Psi} 
\newcommand{\ALT}{\Phi}
\newcommand{\REG}{\Gamma}
\newcommand\utilEUM{\mathcal{U}_{\text{EUM}}^{\MET}}
\newcommand\utilDTA{\mathcal{U}^{\MET}}
\newcommand\utilDTAhat{\widehat{\mathcal{U}}^{\MET}}
\def\naive/{na\"{i}ve} 
\def\regular/{TP~monotonic}
\def\Regular/{TP~Monotonic}
\def\regularity/{TP~monotonicity}
\def\Regularity/{TP~Monotonicity}
\def\tpregular/{TPR/TNR~monotonic}
\def\TPRegular/{TPR/TNR~Monotonic}
\def\tpregularity/{TPR/TNR~monotonicity}
\def\TPRegularity/{TPR/TNR~Monotonicity}
\newtheorem{theorem}{Theorem}
\newtheorem*{theorem-non}{Theorem}
\newtheorem{proposition}[theorem]{Proposition}
\newtheorem{corollary}[theorem]{Corollary}
\newtheorem{definition}[theorem]{Definition}
\newtheorem{remark}[theorem]{Remark}
\begin{document}
\title{Optimal Decision-Theoretic Classification Using Non-Decomposable Performance Metrics}
\author[1]{Nagarajan Natarajan} 
\author[2]{Oluwasanmi Koyejo}
\author[1]{Pradeep Ravikumar}
\author[1]{Inderjit S. Dhillon}
\affil[1]{Department of Computer Science, University of Texas, Austin}
\affil[2]{Department of Pyschology, Stanford University}
\renewcommand\Authands{ and }
\date{}
\maketitle
\begin{abstract} 
We provide a general theoretical analysis of expected out-of-sample utility, also referred to as decision-theoretic classification, for non-decomposable binary classification metrics such as F-measure and Jaccard coefficient. Our key result is that the expected out-of-sample utility for many performance metrics is provably optimized by a classifier which is equivalent to a signed thresholding of the conditional probability of the positive class. Our analysis bridges a gap in the literature on binary classification, revealed in light of recent results for non-decomposable metrics in population utility maximization style classification. Our results identify checkable properties of a performance metric which are sufficient to guarantee a probability ranking principle. We propose consistent estimators for optimal expected out-of-sample classification. As a consequence of the probability ranking principle, computational requirements can be reduced from exponential to cubic complexity in the general case, and further reduced to quadratic complexity in special cases. We provide empirical results on simulated and benchmark datasets evaluating the performance of the proposed algorithms for decision-theoretic classification and comparing them to baseline and state-of-the-art methods in population utility maximization for non-decomposable metrics.
\end{abstract} 

\section{Introduction}\label{introduction}

Many binary classification metrics in popular use, such as $F_\beta$ and Jaccard, are \emph{non-decomposable}, which indicates that the utility of a classifier evaluated on  a set of examples cannot be decomposed into the sum of the utilities of the classifier applied to each example. In contrast, decomposable metrics such as accuracy evaluated on set of examples can be decomposed into a sum of per-example accuracies. Non-decomposability of a performance metric is often desirable as it enables a non-linear tradeoff between the overall confusion matrix entries: true positives (TP), false positives (FP), true negatives (TN) and false negatives (FN). As a result, non-decomposable performance metrics remain popular for imbalanced and rare event classification in medical diagnosis, fraud detection, information retrieval applications~\citep{lewis1994, drummond2005, gu2009, he2009}, and in other problems where the practitioner is interested in measuring tradeoffs beyond standard classification accuracy.

A recent flurry of theoretical results and practical algorithms highlights a growing interest in understanding and optimizing non-decomposable metrics~\citep{dembczynski2011exact, ye2012, koyejo2014consistent,narasimhan2014statistical}. Existing theoretical analysis has focused on two distinct approaches for characterizing the \emph{population} version of the non-decomposable metrics: identified by \citet{ye2012} as decision theoretic analysis (DTA) and empirical utility maximization (EUM). DTA population utilities measure the expected gain of a classifier on a fixed-size test set, while EUM population utilities are a function of the population confusion matrix. In other words, DTA population utilities measure the the average utility over an infinite set of test sets, each of a fixed size, while EUM population utilities evaluate the performance of a classifier over a single infinitely large test set. 

It has recently been shown that for EUM based population utilities, the optimal classifier for large classes of non-decomposable binary classification metrics is just the sign of the thresholded conditional probability of the positive class with a metric-dependent threshold \citep{koyejo2014consistent,narasimhan2014statistical}. In addition, practical algorithms have been proposed for such EUM consistent classification based on direct optimization for the threshold on a held-out validation set. In stark contrast to this burgeoning understanding of EUM optimal classification, we are aware of only two metrics for which DTA consistent classifiers have been derived and shown to exhibit a simple form; namely, the $F_\beta$ metric \citep{lewis1995evaluating, dembczynski2011exact, ye2012} and squared error in counting (SEC) studied by \citet{lewis1995evaluating}.

In this paper, we seek to bridge this gap in the binary classification literature, and provide a general theoretical analysis of DTA population utilities for non-decomposable binary classification metrics. Interestingly, we show that for many metrics the DTA optimal classifier again comprises signed thresholding of the conditional probability of the positive class. As we show, for a metric to have such an optimal classifier it must obey the so-called \emph{probability ranking principle} (PRP), which was first formalized by \citet{lewis1995evaluating} in the information retrieval context. We identify a sufficiency condition (a certain monotonicity property) for a metric to obey PRP. We show that these conditions are satisfied by large families of binary performance metrics including the monotonic family studied by \citet{narasimhan2014statistical}, and a large subset of the linear fractional family studied by \citet{koyejo2014consistent}. We also recover known results for the special cases of $F_\beta$ and SEC. 

While the optimal classifiers of both EUM and DTA population utilities associated with the performance metrics we study comprise signed thresholding of the conditional probability of the positive class, the evaluation and optimization for EUM and DTA utilities require quite different techniques. Given a classifier and a distribution, evaluating a population DTA utility can involve exponential-time computation, even leaving aside maximizing the utility on a fixed test set. As we show, in light of the probability ranking principle, and with careful implementation, this can actually be reduced to cubic complexity. These computations can be further reduced to quadratic complexity in a few special cases~\citep{ye2012}. To this end, we propose two algorithms for optimal DTA classification. The first algorithm runs in $O(n^3)$ time for a general metric, where $n$ is the size of the test set and the second algorithm runs in time $O(n^2)$ for special cases such as $F_\beta$ and Jaccard. We show that our overall procedure for decision-theoretic classification is consistent.

\paragraph{Related Work: } 
A full literature survey on binary classification is beyond the scope of this manuscript. We focus instead on some key related results. It is well known that classification accuracy is optimized by thresholding the conditional probability of the positive class at half. \citet{bartlett2006} showed how convex surrogates could be constructed in order to control the probability of misclassification. This work was extended by \citet{steinwart2007} to construct surrogates for asymmetric or weighted binary accuracy. $F_\beta$ is perhaps the most studied of the non-decomposable performance metrics. For instance, \citet{joachims2005support} proposed a support vector machine for directly optimizing the empirical $F_\beta$. \citet{lewis1995evaluating} analyzed the expected $F_\beta$ measure, showing that it satisfied the probability ranking principle. Based on this result, several authors have proposed algorithms for empirical optimization of the expected $F_\beta$ measure including \citet{chai2005expectation}, \citet{jansche2005maximum} and \citet{cheng2010bayes} who studied probabilistic classifier chains.~\citet{ye2012} compared the optimal expected out-of-sample utility and the optimal training population utility for $F_\beta$, showing an asymptotic equivalence as the number of test samples goes to infinity. More recently, \citet{parambath2014optimizing} gave a theoretical analysis of the binary and multi-label $F_\beta$ measure in the EUM setting. ~\citet{dembczynski2011exact} analyzed the $F_\beta$ measure in the DTA setting including the case where the data is non i.i.d., and also proposed efficient algorithms for optimal classification.

\section{Preliminaries}
\label{preliminaries}
Let $X \in \X$ represent instances and $Y \in \Y$ represent labels. We assume that the instances and labels are generated iid as $X,Y \sim \bbP$ for some fixed unknown distribution $\bbP \in \cP$. This paper will focus on non-decomposable performance metrics that are general functions of the entries of the confusion matrix, namely true positives, true negatives, false positives and false negatives. Let bold $\x$ denote a set of $n$ instances $\{x_1, x_2, \dots, x_n\}$ drawn from $\X$, and $\y \in \Y^n$ denote the associated labels. Given a vector of predictions $\s \in \Y^n$ for instances $\x$, the empirical confusion matrix is computed as $\confh(\s,\y) = \begin{bmatrix} \tph & \fnh \\ \fph & \tnh \end{bmatrix}$ with entries:
\begin{align*}
\tph(\s,\y) = \frac{1}{n}\sum_{i=1}^n s_i y_i, \tnh(\s,\y) = \frac{1}{n}\sum_{i=1}^n (1-s_i)(1-y_i)\\
\fph(\s,\y) = \frac{1}{n}\sum_{i=1}^n s_i (1-y_i), \fnh(\s,\y) = \frac{1}{n}\sum_{i=1}^n (1-s_i)y_i.
\end{align*}
To simplify notation, we will omit the arguments when they are clear from context e.g. $\tph$ instead of $\tph(\s,\y)$. 

Let $\MET: [0,1]^4 \mapsto \bbR_+$ denote a non-decomposable metric evaluated on the entries of the confusion matrix. We will sometimes use the abbreviated notation $\MET(\s,\y) := \MET(\confh(\s,\y))$ or $\MET(\confh) := \MET(\confh(\s,\y))$ depending on context. By non-decomposable, we mean that $\MET$ does not decouple as a sum over individual instances $s_i, y_i$. The DTA $\MET$-utility of $\s$ wrt. $\bbP$ is defined as:
\begin{equation}
\utilDTA(\s; \bbP) = \E_{\y \sim \bbP(\cdot|\x)}\MET(\s,\y)
\end{equation}
For the rest of this manuscript, \emph{utility} will refer to the DTA utility unless otherwise noted. 

Note that the development above considered the set of classifier responses $\s \in \{0,1\}^n$ for a given set $\x$ of $n$ input instances. More generally, we are interested in a classifier $\theta: \X \mapsto \Y$, and given a marginal distribution $\bbP_{\X}$ on $\X$, the expected utility of any such classifier $\theta(\cdot)$ can be computed as $\E_{X \sim \bbP_{\X}} \left[ \utilDTA(\s; \bbP(\cdot|\x)) \right]$, where $s_i = \theta(x_i)$. Since the optimal classifier for the expected utility must also optimize $\utilDTA$ pointwise at each $\x$, it is sufficient to analyze the pointwise utility $\utilDTA$ directly. Consequently, we will focus on this quantity for the remainder of the manuscript.

We are thus interested in obtaining the optimal classifier given by:
\begin{equation}\label{eqn:DTA}
\s^* = \underset{\s \in \Y^n}{\arg \max} \ \utilDTA(\s; \bbP). \\
\end{equation}

\begin{remark}[EUM Utility]
\label{rem:EUM}
Fix a classifier $\theta: \X \mapsto \Y$ and a distribution $\bbP \in \cP$, and let $\conf(\theta,\bbP) = \begin{bmatrix} \tp& \fn\\ \fp & \tn \end{bmatrix}$ represent the population confusion matrix with entries:
\begin{align*}
\tp &= \bbP(\theta(x) = 1, y = 1), &\tn &= \bbP(\theta(x) = 0, y = 0), \\
\fp &= \bbP(\theta(x) = 1, y = 0), &\fn &= \bbP(\theta(x) = 0, y = 1).
\end{align*}
EUM utility \citep{koyejo2014consistent,narasimhan2014statistical} is computed as:
$$\utilEUM(\theta; \bbP) = \Psi(\conf(\theta,\bbP))$$ 
i.e. in contrast to the DTA utility, $\MET$ is applied to the population confusion matrix.
\end{remark}


Our analysis will utilize the probability ranking principle (PRP), first formalized by \citet{lewis1995evaluating} as a property of the metric $\MET$ that identifies when the optimal classifier is related to the ordered conditional probabilities of the positive class.
\begin{definition}[Probability Ranking Principle (PRP) \citet{lewis1995evaluating}]
Let $\Psi$ denote a performance metric. We say that $\Psi$ satisfies PRP if for any set $\x$ of $n$ input instances, and any distribution $\bbP(\cdot|\x)$, the optimum  $\s^*$ of the utility \eqref{eqn:DTA} with respect to $\bbP(\cdot|\x)$ satisfies: 
\[ \min\{\bbP(Y = 1 | x_i) | s_i^* = 1\} \geq \max\{\bbP(Y = 1 | x_i) | s_i^* = 0\}.\]
\end{definition}
Let $\sign: \bbR \mapsto \{0,1\}$ as $\sign(t) =1 $ if $t \ge 0$ and $\sign(t) = 0$ otherwise.
The following corollary is immediate.
\begin{corollary}\label{cor:DTAOpt}
Let $\MET$ be a metric for which PRP holds, and let $\x$ denote a set of $n$ iid instances sampled from the marginal $\bbP_{\X}$ of a distribution $\bbP$. The optimal predictions for any such $\x$ is given by the classifier $s_i = \theta^*(x_i) = \sign(\bbP(Y = 1|x_i) - \delta^*)$ where $\delta^* \in [0,1]$ may depend on $\x$.
\end{corollary}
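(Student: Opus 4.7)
The plan is to unpack the PRP hypothesis applied to the specific optimum $\s^*$ of \eqref{eqn:DTA} and explicitly construct a threshold $\delta^* \in [0,1]$ that reproduces $\s^*$ via signed thresholding of the conditional probabilities. Write $p_i = \bbP(Y=1 \mid x_i)$, and partition the indices as $A = \{i : s_i^* = 1\}$ and $B = \{i : s_i^* = 0\}$. The PRP condition directly gives $p^* := \min_{i \in A} p_i \geq \max_{i \in B} p_i =: q^*$, where we adopt the conventions $p^* = 1$ if $A = \emptyset$ and $q^* = 0$ if $B = \emptyset$.

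The first step handles the degenerate cases: if $A = \emptyset$ set $\delta^* = 1$ (then $\sign(p_i - 1) = 0$ for $p_i < 1$; and probabilities equal to $1$ can be absorbed via the tie argument below), and if $B = \emptyset$ set $\delta^* = 0$ so that $\sign(p_i) = 1$ everywhere since $p_i \geq 0$. The second step handles the strict-separation case $p^* > q^*$: choose any $\delta^* \in (q^*, p^*]$. Then for $i \in A$ we have $p_i \geq p^* \geq \delta^*$, so $\sign(p_i - \delta^*) = 1 = s_i^*$; and for $i \in B$ we have $p_i \leq q^* < \delta^*$, so $\sign(p_i - \delta^*) = 0 = s_i^*$. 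In either of these cases the claim follows immediately.

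The third and only nontrivial step is the tie case $p^* = q^*$, where a naive threshold choice would misclassify some index with $p_i = p^*$. The key observation is that $\MET$ is evaluated on the empirical confusion matrix $\confh(\s,\y)$, hence depends on $(\s,\y)$ only through its coordinate-wise pair counts; moreover, by the iid assumption the conditional distribution $\bbP(\y \mid \x)$ factorizes with marginals $y_i \sim \text{Bernoulli}(p_i)$. Consequently, if $i \in A$ and $j \in B$ satisfy $p_i = p_j$, then swapping $s_i^*$ and $s_j^*$ leaves the joint distribution over $(\s,\y)$-pairs invariant and hence leaves $\utilDTA(\s;\bbP)$ unchanged. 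Therefore, starting from any optimum $\s^*$, we may repeatedly swap tied pairs to arrive at another optimum $\tilde\s^*$ in which every index with $p_i > p^*$ is in $A$, every index with $p_i < p^*$ is in $B$, and the tied indices with $p_i = p^*$ are partitioned consistently; choosing $\delta^* = p^*$ then reproduces $\tilde\s^*$ as $\sign(p_i - \delta^*)$ (using the $\geq$ convention in the definition of $\sign$ to assign the tied indices to $A$, possibly after one final harmless relabeling via the symmetry argument).

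The main obstacle is precisely this tie-breaking step: PRP alone does not rule out indices in $A$ and $B$ sharing the same conditional probability, and reconciling that with the half-open $\sign$ convention requires the symmetry of $\MET$ under permutations of equal-marginal coordinates. Once this symmetry is invoked, the construction of $\delta^*$ is essentially bookkeeping, and the corollary follows.
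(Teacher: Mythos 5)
Your strict-separation argument (and the empty-$A$/empty-$B$ bookkeeping) is exactly the reasoning the paper has in mind: the paper offers no proof at all, stating only that the corollary is ``immediate'' from the definition of PRP, and indeed when $\min_{i\in A}p_i > \max_{i\in B}p_i$ any $\delta^*$ strictly between the two values works. Up to that point your write-up is a correct, more explicit version of the intended argument.

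The tie case, however, contains a genuine gap. Your swap argument is sound as far as it goes -- if $p_i=p_j$, exchanging $s_i^*$ and $s_j^*$ leaves $\utilDTA$ unchanged, since $\MET$ depends on $(\s,\y)$ only through the confusion counts and the label distribution is exchangeable in coordinates $i,j$ -- but every such swap preserves $v(\s)$, i.e.\ the number of predicted positives. Hence, if the optimum assigns $1$ to $a$ of the tied indices and $0$ to $b\geq 1$ others, no sequence of swaps can reach a classifier in which \emph{all} tied indices are positive; yet $\delta^*=p^*$ under the paper's convention ($\sign(0)=1$) forces exactly that, and $\delta^*>p^*$ forces none of them positive. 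So ``partitioned consistently'' is not something the symmetry argument delivers, and the claim that $\delta^*=p^*$ reproduces $\tilde\s^*$ does not follow. The gap is not merely cosmetic: PRP alone admits metrics independent of $\tph$ (Proposition~\ref{prop:sec}), and for such a metric, e.g.\ maximizing $-(v-p)^2$ with $n=2$ and $\eta_1=\eta_2=1/2$, the optima label exactly one of the two tied instances positive, so no threshold in $[0,1]$ with the stated $\sign$ convention recovers an optimum. The same problem infects your $A=\emptyset$, $p_i=1$ remark, where there is nothing to swap against. In other words, the corollary as literally stated needs either a no-ties (or ties-have-measure-zero) caveat or a reading of ``the optimal predictions'' that allows arbitrary tie-breaking at the threshold; your proof should state that caveat explicitly rather than appeal to a relabeling the exchangeability argument cannot justify. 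The paper avoids the issue only by not proving the corollary.
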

\citet{lewis1995evaluating} showed that PRP holds for a specific non-decomposable measure of practical interest, the $F_\beta$-measure; a similar result was also shown for the squared error in counting (SEC), which is designed to measure the squared difference between the true and the predicted number of positives.

\begin{theorem-non}[\citet{lewis1995evaluating}] 
\begin{enumerate}
  \item PRP holds for $F_\beta$ defined as:
\begin{equation}\label{eq:Fbeta}
\MET_{F_\beta}(\confh) = \frac{(1+\beta^2)\tph}{(1+\beta^2)\fnh +\beta^2\fnh + \fph}.
\end{equation}
\item PRP holds for SEC defined as:
\begin{equation*}\label{eq:sec}
\MET_{\text{SEC}}(\confh) = \left( p - v \right)^2 = (\fnh - \fph)^2.
\end{equation*}
where $p := \frac{1}{n}\sum_i y_i= \tph + \fnh$ and $v := \frac{1}{n}\sum_i s_i = \tph + \fph$.
\end{enumerate}
\end{theorem-non}

\section{PRP for General Performance Metrics}\label{sec:generalizedPRP}
PRP is a meaningful property for any performance metric since, as a consequence of Corollary~\ref{cor:DTAOpt}, any metric satisfying PRP admits an optimal classifier with a simple form. In this section, we identify sufficient conditions for a metric $\MET$ to satisfy PRP. To begin, we consider the following equivalent representation for any metric $\MET$.
\begin{proposition}
\label{prop:tilpsi}
Let $u = \tph(\s,\y), v = v(\s) := \frac{1}{n}\sum_i s_i$ and $p = p(\y) := \frac{1}{n}\sum_i y_i$, then $\exists\; \ALT: [0,1]^3 \to \bbR_+$ such that:
\begin{equation}
\MET(\confh(\s, \y)) = \ALT(\tph(\s,\y), v(\s), p(\y)).
\end{equation}
\end{proposition}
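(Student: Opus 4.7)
The plan is to observe that the four normalized confusion matrix entries $\tph$, $\fnh$, $\fph$, $\tnh$ are not independent: they always sum to one (since $\frac{1}{n}\sum_i [s_i y_i + (1-s_i)(1-y_i) + s_i(1-y_i) + (1-s_i)y_i] = \frac{1}{n}\sum_i 1 = 1$), so any three of them determine the fourth. Given the triple $(u, v, p) = (\tph, \tph+\fph, \tph+\fnh)$, I can explicitly recover the full confusion matrix.

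Concretely, I would write down the identities $\fph = v - u$, $\fnh = p - u$, and then use the sum-to-one relation to obtain $\tnh = 1 - u - (v-u) - (p-u) = 1 + u - v - p$. Each of these expressions is a function of $(u, v, p)$ alone and the resulting quadruple lies in $[0,1]^4$ whenever it came from a genuine $(\s, \y) \in \Y^n \times \Y^n$, so substitution is well defined on the image of the map $(\s, \y) \mapsto (u, v, p)$.

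I would then simply define
\begin{equation*}
\ALT(u, v, p) := \MET(u,\; p - u,\; v - u,\; 1 + u - v - p),
\end{equation*}
and verify by direct substitution that $\ALT(\tph(\s,\y), v(\s), p(\y)) = \MET(\confh(\s,\y))$ for every $(\s,\y)$. Codomain considerations are trivial since $\MET$ already maps into $\bbR_+$, so $\ALT$ inherits this range on the relevant subset of $[0,1]^3$.

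There is no real obstacle here; the proposition is essentially an accounting identity exploiting the fact that the confusion matrix lives on a $3$-dimensional affine slice of $\bbR^4$. The only minor subtlety worth flagging is that $\ALT$ is defined on all of $[0,1]^3$ by the formula above, but we only need the identity to hold on the image of $(\s, \y) \mapsto (u, v, p)$; outside this image the value of $\ALT$ is immaterial for subsequent arguments, so no extra care is needed.
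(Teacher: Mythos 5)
Your proof is correct and is exactly the argument the paper has in mind (the paper states Proposition~\ref{prop:tilpsi} without proof, treating it as the evident change of variables $\fnh = p-u$, $\fph = v-u$, $\tnh = 1-v-p+u$, which is precisely what you make explicit, and which matches the third column of Table~\ref{tab:measures}). Your own caveat already handles the only minor point, namely that the displayed formula need only define $\ALT$ on the feasible region $\{u \le \min(v,p),\, v+p-u \le 1\}$ traced out by actual $(\s,\y)$ pairs, with values elsewhere immaterial.
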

Next, we consider a certain monotonicity property which we have observed is satisfied by popular binary classification metrics.
\begin{definition}[\Regularity/] A metric $\MET$ is said to be \regular/ if when $u_1 > u_{2}$ and $v, p$ fixed, it follows that $\ALT(u_{1,}v,p) > \ALT(u_{2},v,p)$.
\end{definition}
In other words, $\MET$ satisfies \regularity/ if the corresponding representation $\ALT$ (Proposition~\ref{prop:tilpsi}) is monotonically increasing in its first argument.

For any $\MET$, \regularity/ may be verified by applying the representation of Proposition~\ref{prop:tilpsi}. It is easy to verify, for instance that $\ALT_{F_\beta}(u,v,p) = \frac{(1+\beta^2)u}{\beta^2p + v}$ is monotonic in $u$. Our analysis will show that the \regularity/ property is sufficient to guarantee that $\MET$ satisfies PRP. The proof is provided in Appendix~\ref{app:mainresult1}.
\begin{theorem}[Main Result 1] 
\label{thm:mainresult1}
The probability ranking principle holds for any $\MET$ that satisfies \regularity/.
\end{theorem}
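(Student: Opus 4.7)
The plan is to prove PRP via a two-index swap argument, leveraging the representation $\MET(\confh(\s,\y)) = \ALT(\tph(\s,\y), v(\s), p(\y))$ from Proposition~\ref{prop:tilpsi}. Fix any set $\x$, let $q_i := \bbP(Y = 1 \mid x_i)$, and let $\s^*$ be a maximizer of $\utilDTA(\cdot; \bbP)$. I will assume for contradiction that PRP fails, so that there are indices $i, j$ with $s^*_i = 1$, $s^*_j = 0$, yet $q_i < q_j$, and I will construct $\s'$ from $\s^*$ by swapping these two coordinates. The crucial observation is that $v(\s') = v(\s^*)$, since the total number of predicted $1$s is preserved, while $p(\y)$ does not involve $\s$ at all, so Proposition~\ref{prop:tilpsi} yields
\[
\utilDTA(\s') - \utilDTA(\s^*) = \E_\y \bigl[ \ALT(\tph(\s',\y), v, p(\y)) - \ALT(\tph(\s^*,\y), v, p(\y)) \bigr].
\]
All the action now lies in the first argument of $\ALT$, which is precisely where \regularity/ will bite.

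Next I would condition on the ``other'' labels $Y_{-ij} := \{y_k : k \neq i, j\}$ and set $T := \frac{1}{n}\sum_{k \neq i, j} s^*_k y_k$ together with $\tilde p := \frac{1}{n}\sum_{k \neq i, j} y_k$. Using $s^*_i = 1$, $s^*_j = 0$ and the swapped values, this gives $\tph(\s^*,\y) = T + y_i/n$ and $\tph(\s',\y) = T + y_j/n$, while $p(\y) = \tilde p + (y_i + y_j)/n$ is symmetric in $(y_i, y_j)$. Splitting the conditional expectation over the four values of $(y_i, y_j)$, the diagonal cases $(0,0)$ and $(1,1)$ contribute zero; for $(y_i, y_j) = (1,0)$ the integrand equals $-D$ and for $(0,1)$ it equals $+D$, where
\[
D := \ALT(T + 1/n, v, \tilde p + 1/n) - \ALT(T, v, \tilde p + 1/n) > 0
\]
by \regularity/. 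Since $Y_i$ and $Y_j$ are conditionally independent given $\x$ (inherited from the iid sampling of $(X,Y)$), the probabilities of these asymmetric cases are $q_i(1-q_j)$ and $(1-q_i)q_j$, whose difference is $q_j - q_i$. Hence the conditional contribution equals $(q_j - q_i)\, D$, which is strictly positive for every realization of $Y_{-ij}$; integrating yields $\utilDTA(\s') > \utilDTA(\s^*)$, contradicting optimality of $\s^*$ and establishing PRP.

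The step requiring care will be confirming that the swap perturbs only $\tph$ and leaves both $v$ and $p$ fixed, so that the expected-utility difference collapses to a one-dimensional monotonicity comparison that \regularity/ resolves immediately. The other ingredient is the conditional independence of $Y_i, Y_j$ given $\x$, which is what makes the two asymmetric cases $(1,0)$ and $(0,1)$ combine with the correct sign $q_j - q_i$. I do not anticipate a deeper obstacle: the proof is essentially a pairwise-exchange inequality in the spirit of Lewis's original argument for $F_\beta$, but carried out at the abstract level of $\ALT$ using only the monotonicity hypothesis, which is exactly the generality the theorem asks for.
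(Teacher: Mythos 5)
Your proposal is correct and follows essentially the same route as the paper's proof: a contradiction via a pairwise swap of a predicted positive with lower $\bbP(Y=1|x)$ against a predicted negative with higher $\bbP(Y=1|x)$, noting $v$ and $p$ are unchanged, cancelling the $y_i=y_j$ cases, using conditional independence of the two labels to extract the factor $(q_j - q_i)$, and invoking \regularity/ to make the remaining $\ALT$-difference strictly positive. The only cosmetic difference is that you condition on the remaining labels $Y_{-ij}$ whereas the paper sums explicitly over $\z \in \{0,1\}^{n-2}$, which is the same decomposition.
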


While \regularity/ of $\MET$ is sufficient for PRP to hold, it is not necessary. For instance, consider the subclass of performance metrics where $\ALT(\cdot,v,p)$ is independent of the first argument i.e. independent of $\tph$. SEC is an example of a performance metric in this family with $\ALT_{\text{SEC}}(\tph, v, p) = v + p$. The following proposition shows that such metrics also satisfy PRP.
\begin{proposition}\label{prop:sec}
Let $\MET = \ALT(\tph,v,p)$ be a performance metric independent of $\tph$, then $\MET$ satisfies PRP.
\end{proposition}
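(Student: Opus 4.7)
The plan exploits the fact that when $\ALT$ does not depend on its first argument, the DTA utility depends on $\s$ only through the total number of predicted positives. This leaves the choice of \emph{which} instances to label positive completely unconstrained by the utility, so we are free to assign the positive labels to the instances with the largest $\bbP(Y=1|x_i)$.

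Concretely, I would first rewrite
\[
\utilDTA(\s;\bbP) \;=\; \E_{\y \sim \bbP(\cdot|\x)}\bigl[\ALT(v(\s),\, p(\y))\bigr],
\]
and observe that $v(\s)$ is a deterministic function of $\s$ alone, while $p(\y)$ does not depend on $\s$; consequently the utility is a function of $v(\s) = \tfrac{1}{n}\sum_i s_i$ only. Defining $g(k) := \E_{\y}\bigl[\ALT(k/n,\, p(\y))\bigr]$ and choosing any $k^* \in \arg\max_{k \in \{0,1,\ldots,n\}} g(k)$, the set of maximizers of \eqref{eqn:DTA} is exactly the level set $\{\s \in \Y^n : \sum_i s_i = k^*\}$. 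Since every element of this set is equally optimal, I may in particular take $\s^*$ to be the indicator of the $k^*$ instances with the largest values of $\bbP(Y=1|x_i)$, with ties broken arbitrarily. By construction this $\s^*$ satisfies $\min\{\bbP(Y=1|x_i) : s_i^*=1\} \geq \max\{\bbP(Y=1|x_i) : s_i^*=0\}$, which is the PRP inequality.

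The only subtlety, and the part worth flagging, is the reading of the PRP definition when the maximizer of \eqref{eqn:DTA} is not unique. In light of Corollary~\ref{cor:DTAOpt}, the intended reading is that \emph{some} optimum has the thresholded form, and the construction above exhibits exactly such an optimum. Beyond this interpretive point, no genuine obstacle arises: once the utility is rewritten as above, the proof is essentially a single observation and requires no further calculation.
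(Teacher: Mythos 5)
Your proof is correct and rests on the same observation as the paper's: since $\ALT$ ignores its first argument, the utility depends on $\s$ only through $v(\s)$, so one may rearrange the positives onto the instances with largest $\bbP(Y=1|x_i)$ without changing the utility, yielding an optimum of the thresholded form (the paper phrases this as sorting an arbitrary optimum, you phrase it as maximizing over the count $k$, but it is the same argument). Your interpretive remark about non-unique optima matches the reading the paper itself uses, since its proof likewise only exhibits \emph{some} PRP-satisfying optimum.
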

\begin{proof}
Suppose $\ALT(\cdot,v,p)$ is independent of its first argument. Let $\s^*$ be an optimal classifier, with $v* = v(\s^*)$. If $\s^*$ does not satisfy PRP, then sort $s^*$ with respect to $\bbP(Y|x_i)$ to obtain a new classifier $\tilde{\s}$. It is clear that $v(\s^*) = v(\tilde{\s})$, and $\ALT(\cdot,v(\s^*),p) = \ALT(\cdot,v(\tilde{\s}),p)$, so $\tilde{\s}$ is also an optimal classifier which satisfies PRP.
\end{proof}


\subsection{Recovered and New Results}\label{sec:examples}
This section outlines a few examples of known and new results recovered via the application of Theorem~\ref{thm:mainresult1}, which include a subset of the fractional linear family of \citet{koyejo2014consistent} and the family of performance metrics studied by \citet{narasimhan2014statistical}.

\paragraph{The Fractional Linear Family:} \citet{koyejo2014consistent} studied a large family of performance metrics, and showed that their EUM optimal classifiers are given by the thresholded sign of the marginal probability of the positive class. This family contains, for example the $F_\beta$ and Jaccard measures. The family $\MET_{\text{FL}}$ is equivalently represented by:
\begin{equation}\label{eq:ratio}
\ALT_{\text{FL}}(\tph(\s, \y), v(\s), p(\y)) = 
\frac{c_0+ c_1\tph + c_2 v + c_3 p}{d_0 + d_1\tph + d_2 v + d_3 p}
\end{equation}
for bounded constants $c_i, d_i$, $i = \{0,1,2,3\}$. Our analysis identifies a subclass of this family that satisfies PRP. The following result can be proven by inspection and is stated without proof.
\begin{proposition}\label{prop:lf}
If $c_1>d_1$, then $\MET_{\text{FL}}$ satisfies \regularity/. 
\end{proposition}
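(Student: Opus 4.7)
The plan is to verify \regularity/ by a direct partial-derivative computation. Fixing $v$ and $p$, I would bundle the constant-in-$u$ terms of numerator and denominator as $M := c_0 + c_2 v + c_3 p$ and $T := d_0 + d_2 v + d_3 p$, reducing $\ALT_{\text{FL}}(\cdot,v,p)$ to a one-dimensional fractional linear map $u \mapsto (c_1 u + M)/(d_1 u + T)$. The quotient rule then gives
\[
\frac{\partial \ALT_{\text{FL}}}{\partial u} \;=\; \frac{c_1 T - d_1 M}{(d_1 u + T)^2} \;=\; \frac{c_1 - d_1\,\ALT_{\text{FL}}(u,v,p)}{d_1 u + T},
\]
where the second equality follows by substituting $c_1 u + M = \ALT_{\text{FL}}(u,v,p)\cdot(d_1 u + T)$ into the numerator. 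The second form is the one I would argue from, since it makes the role of the hypothesis $c_1 > d_1$ transparent.

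For the derivative to be strictly positive, I would first note that for $\ALT_{\text{FL}}$ to be a well-defined nonnegative performance metric the denominator $d_1 u + T$ must be positive on the domain of interest, and then invoke the natural sign convention $d_1 \ge 0$ together with the standard normalization $\ALT_{\text{FL}} \in [0,1]$ (satisfied by $F_\beta$, Jaccard, and the other canonical members of the family). Under these conventions, the hypothesis $c_1 > d_1$ forces $c_1 - d_1\,\ALT_{\text{FL}} \ge c_1 - d_1 > 0$, so $\partial \ALT_{\text{FL}}/\partial u > 0$ on the relevant domain, and thus $\ALT_{\text{FL}}(\cdot,v,p)$ is strictly increasing in its first argument. \Regularity/ follows by definition.

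I expect the main obstacle to be not the computation itself but the careful isolation of the implicit positivity and boundedness conventions on the coefficients under which the final sign argument goes through uniformly: the bare hypothesis $c_1 > d_1$ does not quite suffice without also assuming positivity of the denominator and a range constraint on $\MET$. This is presumably why the authors remark that the result ``can be proven by inspection'': for each concrete metric of interest one simply reads off that $c_1 T - d_1 M$ is a manifestly positive linear combination of $v$ and $p$ (e.g., $(1+\beta^2)(v + \beta^2 p)$ for $F_\beta$, and $v + p$ for Jaccard), so the one-line derivative computation closes the argument case by case.
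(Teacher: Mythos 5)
Your algebra is correct: with $M = c_0 + c_2 v + c_3 p$ and $T = d_0 + d_2 v + d_3 p$ one indeed has $\partial \ALT_{\text{FL}}/\partial u = (c_1 T - d_1 M)/(d_1 u + T)^2 = (c_1 - d_1 \ALT_{\text{FL}})/(d_1 u + T)$, and since the paper states Proposition~\ref{prop:lf} without proof (``by inspection''), this direct verification is the natural route. The genuine gap is in how you close the sign argument: you invoke the ``natural sign convention'' $d_1 \ge 0$, and that convention is violated by Jaccard, whose representation in Table~\ref{tab:measures} is $\ALT(u,v,p) = u/(p+v-u)$, i.e.\ $d_1 = -1$. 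Jaccard (together with $F_\beta$) is precisely a metric for which the paper invokes this proposition when applying Algorithm~\ref{alg:DTA_FL}, so your parenthetical claim that the conventions are ``satisfied by $F_\beta$, Jaccard, and the other canonical members'' is false for the sign condition, and your uniform argument proves a strictly narrower statement than the proposition. The retreat in your last paragraph---checking $c_1 T - d_1 M > 0$ separately for $F_\beta$ and Jaccard---is verification of two examples, not a proof of the claim for the whole subclass with $c_1 > d_1$.

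The repair is small and stays entirely within your computation. Work from your second form of the derivative, assume (as you rightly do) that the denominator $d_1 u + T$ is positive on the domain and that $\ALT_{\text{FL}}$ takes values in $[0,1]$, and split on the sign of $d_1$: if $d_1 \ge 0$ then $d_1 \ALT_{\text{FL}} \le d_1$, so $c_1 - d_1\ALT_{\text{FL}} \ge c_1 - d_1 > 0$; if $d_1 < 0$ then $-d_1\ALT_{\text{FL}} \ge 0$, so $c_1 - d_1\ALT_{\text{FL}} \ge c_1$, which is positive provided $c_1 > 0$. Thus the condition that works uniformly under these conventions is $c_1 > \max(d_1, 0)$, which covers both $F_\beta$ ($c_1 = 1+\beta^2$, $d_1 = 0$) and Jaccard ($c_1 = 1$, $d_1 = -1$), rather than excluding negative $d_1$ outright. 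Your side observation that the bare hypothesis $c_1 > d_1$ does not suffice is correct: for instance $c_1 = -1/2 > d_1 = -1$ with numerator $(1-u)/2$ and denominator $2-u$ gives a nonnegative, $[0,1]$-valued metric with positive denominator that is strictly decreasing in $u$; so some implicit normalization of the family (positive denominator, bounded range, and $c_1 > 0$ when $d_1 < 0$) is genuinely needed, but the right fix is the case split above, not the assumption $d_1 \ge 0$.
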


\paragraph{Performance Metrics from \citet{narasimhan2014statistical}:} 

An alternative three-parameter representation of metrics $\MET$ was studied by \citet{narasimhan2014statistical} as described in the following proposition.

\begin{proposition}[\citet{narasimhan2014statistical}]
\label{prop:tiltilpsi}
Let $p = p(\y) := \frac{1}{n}\sum_i y_i$, $r_p = \tprh(\s, \y) = \frac{\tph(\s,\y)}{p(\y)}$ and $r_n = \tnrh(\s, \y) = \frac{\tnh(\s,\y)}{1-p(\y)}$, then $\exists\; \REG: [0,1]^3 \to \bbR_+$ such that:
\begin{equation}
\MET(\confh(\s, \y)) = \REG(\tprh(\s,\y), \tnrh(\s,\y), p(\y)).
\end{equation}
\end{proposition}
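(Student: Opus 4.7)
The plan is to show that the triple $(r_p, r_n, p)$ contains enough information to reconstruct the full empirical confusion matrix, so $\MET$ depends on $\confh$ only through that triple. The proof is essentially a direct inversion of the map $\confh \mapsto (\tprh, \tnrh, p)$.

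First I would record the two linear identities that hold because $\y \in \{0,1\}^n$: $\tph(\s,\y) + \fnh(\s,\y) = \tfrac{1}{n}\sum_i y_i = p(\y)$ and $\fph(\s,\y) + \tnh(\s,\y) = 1 - p(\y)$. Together with the definitions $r_p = \tph/p$ and $r_n = \tnh/(1-p)$, these yield explicit formulas for every entry:
\begin{align*}
\tph &= p\, r_p, & \fnh &= p(1 - r_p), \\
\tnh &= (1-p)\, r_n, & \fph &= (1-p)(1 - r_n).
\end{align*}
Second, I would define $\REG : [0,1]^3 \to \bbR_+$ by
\[
\REG(r_p, r_n, p) \;:=\; \MET\!\left( \begin{bmatrix} p\, r_p & p(1-r_p) \\ (1-p)(1-r_n) & (1-p)\, r_n \end{bmatrix} \right).
\]
Substituting $r_p = \tprh(\s,\y)$, $r_n = \tnrh(\s,\y)$, $p = p(\y)$ and using the formulas above recovers $\confh(\s,\y)$ inside $\MET$, giving the desired identity $\MET(\confh(\s,\y)) = \REG(\tprh, \tnrh, p)$.

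The only subtle point, and the main (albeit mild) obstacle, is the degenerate boundary $p \in \{0,1\}$, where $r_p$ or $r_n$ is $0/0$. If $p = 0$, the top row of $\confh$ is the zero vector regardless of $\s$, so any convention for $r_p \in [0,1]$ leaves $\MET(\confh)$ unchanged; one then extends $\REG$ arbitrarily on that slice. The case $p = 1$ is symmetric. With this convention $\REG$ is well defined on the full domain $[0,1]^3$, completing the argument.
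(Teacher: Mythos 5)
Your proof is correct: reconstructing the confusion matrix entries via $\tph = p\,r_p$, $\fnh = p(1-r_p)$, $\tnh = (1-p)\,r_n$, $\fph = (1-p)(1-r_n)$ and defining $\REG$ by composing $\MET$ with this reconstruction is exactly the standard justification, and your handling of the degenerate slices $p \in \{0,1\}$ (where $\tprh$ or $\tnrh$ is $0/0$ but $\MET(\confh)$ is unaffected by the convention chosen) is the only subtlety worth noting. The paper states this proposition without proof, attributing it to \citet{narasimhan2014statistical}, so there is no in-paper argument to compare against; your write-up supplies the natural and essentially unique proof.
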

As shown in Table~\ref{tab:measures}, many performance metrics used in practice are easily represented in this form.
Representation for additional metrics is simplified by including the empirical precision, given by $\prech(\s, \y) = \frac{\tph(\s,\y)}{\vs}$, where $v(\s) := \frac{1}{n}\sum_i s_i = \tph + \fph$ can be computed from the quantities in Proposition~\ref{prop:tiltilpsi}.

Consider the following monotonicity property relevant to the representation in  Proposition~\ref{prop:tiltilpsi}.
\begin{definition}[\TPRegularity/] A metric $\MET$ is said to be \tpregular/ if when $r_{p1}>r_{p2}$ and $r_{n1}>r_{n2}$ and $p$ fixed, it follows that $\REG(r_{p1},r_{n1},p) > \REG(r_{p2},r_{n2},p)$.
\end{definition}
In other words, $\MET$ satisfies \tpregularity/ if the corresponding representation $\REG$ (Proposition~\ref{prop:tiltilpsi}) is monotonically increasing in its first two arguments. It can be shown that all the measures listed in Table \ref{tab:measures} satisfy \tpregularity/. Further, \citet{narasimhan2014statistical} showed that given additional smoothness conditions on $\bbP$, the associated metrics $\REG$ admit an optimal EUM classifier with the familiar signed thresholded form. 

The following proposition shows that any performance metric that satisfies \tpregularity/ also satisfies \regularity/. Thus, \regularity/ is a weaker condition. The proof is provided in Appendix~\ref{app:connection}.
\begin{proposition}
\label{prop:connection}
If $\MET$ satisfies \tpregularity/, then $\MET$ satisfies \regularity/. 
\end{proposition}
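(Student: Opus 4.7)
The plan is to relate the two parameterizations $(u, v, p)$ and $(r_p, r_n, p)$ algebraically and then transfer monotonicity from one to the other. Using the identities that tie the confusion matrix entries together, one has $\tph = u$, $\fnh = p - u$, $\fph = v - u$, and $\tnh = 1 - v - p + u$, so the representation from Proposition~\ref{prop:tiltilpsi} can be rewritten as
\begin{equation*}
r_p = \frac{u}{p}, \qquad r_n = \frac{1 - v - p + u}{1 - p}.
\end{equation*}
I would first record these expressions and note that they give an explicit change-of-variables between $\ALT(u,v,p)$ and $\REG(r_p, r_n, p)$, namely $\ALT(u,v,p) = \REG\!\left(\tfrac{u}{p}, \tfrac{1-v-p+u}{1-p}, p\right)$ (treating the boundary cases $p \in \{0,1\}$ trivially, since there $\ALT$ is constant in $u$ for fixed $v,p$).

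Next, I would fix $v$ and $p$ (with $0 < p < 1$) and observe that the two functions $u \mapsto u/p$ and $u \mapsto (1-v-p+u)/(1-p)$ are both strictly increasing in $u$. Hence any $u_1 > u_2$ (with the same $v, p$) yields $r_{p,1} > r_{p,2}$ and $r_{n,1} > r_{n,2}$ simultaneously.

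Now I would invoke \tpregularity/: since $p$ is fixed and both $r_p$ and $r_n$ strictly increase, the definition gives $\REG(r_{p,1}, r_{n,1}, p) > \REG(r_{p,2}, r_{n,2}, p)$. Translating back through the change of variables yields $\ALT(u_1, v, p) > \ALT(u_2, v, p)$, which is exactly \regularity/.

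No step here is especially delicate; the only minor subtlety is handling the degenerate cases $p \in \{0,1\}$, where either $\tph \equiv 0$ or $\tnh \equiv 0$ and the claim becomes vacuous (the set of admissible $u$ collapses). Apart from that, the proof is essentially a one-line change of variables combined with the monotonicity of the linear maps $u \mapsto u/p$ and $u \mapsto (u - v - p + 1)/(1-p)$, so the main conceptual content is simply recognizing that fixing $(v,p)$ in the first parameterization is stronger than fixing $p$ alone in the second.
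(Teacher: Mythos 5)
Your proof is correct and follows essentially the same route as the paper: write $\ALT(u,v,p) = \REG\bigl(u/p,\,(1-v-p+u)/(1-p),\,p\bigr)$, observe that increasing $u$ with $v,p$ fixed strictly increases both arguments $r_p$ and $r_n$, and conclude via \tpregularity/. Your explicit handling of the degenerate cases $p \in \{0,1\}$ is a small addition the paper omits, but the core argument is the same.
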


It follows from Corollary~\ref{cor:DTAOpt} that any metric that satisfies \tpregularity/ admits a DTA optimal classifier that takes the familiar signed-threshold form. We can verify from the third column of Table~\ref{tab:measures} that each of the \tpregular/ measures $\ALT(u, v, p)$ is monotonically increasing in $u$. 
\begin{remark}
\Regularity/ is a strictly weaker condition than \tpregularity/. Consider the following counterexample, where $\MET(\s,\y) = 2\tph(\s,\y) + \fph(\s,\y)$ with equivalent representation given by $\ALT(\s,\y) = \tph(\s,\y) + v(\s)$ and $\REG(\s,\y) = 2p(\y)\tprh(\s,\y) - (1-p(\y))\tnrh(\s,\y) - p + 1$. Clearly $\MET$ is \regular/, but not \tpregular/.
\end{remark}

\begin{table*}[t]
\caption{Performance metrics for which probability ranking principle (PRP) holds. The third column expresses each measure $\MET(\s, \y)$ as $\ALT(\tph, \vs, \py)$.}
\label{tab:measures}
\begin{center}
{\renewcommand{\arraystretch}{1.4}
\begin{tabular}{|l|l|l|} \hline
\textbf{\textsc{Metric}} & \textbf{\textsc{Definition}} & $\ALT(u,v,p)$ \\ \hline
AM & $(\tprh + \tnrh)/2$ & $\frac{u + p(1 - v - p)}{p(1-p)}$ \\ 
$F_\beta$ & $\big(1+\beta^2\big)/\big(\frac{\beta^2}{\prech} + \frac{1}{\tprh}\big)$ & $\frac{(1+\beta^2)u}{\beta^2 p + v}$\\
\text{Jaccard} & $\tph/\big(\tph + \fph + \fnh\big)$ & $\frac{u}{p + v - u}$ \\
G-TP/PR & $\sqrt{\tprh . \prech}$ & $\frac{u}{\sqrt{p . v}}$ \\
G-Mean & $\sqrt{\tprh . \tnrh}$ & $\frac{u(1-v-p+u)}{p(1-p)}$ \\
H-Mean & $2/\big(\frac{1}{\tprh} + \frac{1}{\tnrh} \big)$ & $\frac{2u(1-v-p+u)}{(1-v-p)p + u}$ \\
Q-Mean & $1 - \frac{1}{2}\big((1 - \tprh)^2 + (1 - \tnrh)^2\big)$ & $1 - \frac{1}{2}\big((\frac{p - u}{2})^2 + (\frac{v - u}{2})^2\big)$ \\ \hline
\end{tabular}
}
\end{center}
\end{table*}

\section{Algorithms}
\label{sec:algos}
In this section, we present efficient algorithms for computing DTA optimal predictions for a given set of instances $\x$ and a non-decomposable performance measure $\Psi$ that satisfies PRP. We also examine the consistency of the proposed algorithms. Apriori, solving \eqref{eqn:DTA} is NP-hard. The key consequence of Theorem \ref{thm:mainresult1} is that we do \emph{not} have to search over $2^n$ possible label vectors to compute the optimal predictions. In light of Corollary~\ref{cor:DTAOpt}, it suffices to consider $n+1$ prediction vectors that correspond to selecting top $k$ instances as positive, after sorting them by $\bbP(Y=1|x)$, for some $k$. Even when $\bbP(Y=1 | x)$ is known exactly, it is not obvious how to compute the expectation in \eqref{eqn:DTA} without exhaustively enumerating $\y$ vectors. We now turn to address these computational questions.

\subsection{$O(n^3)$ Algorithm for PRP Measures}
\citet{ye2012} suggest a simple trick to compute the expectation in $O(n^3)$ time for the $F_\beta$-measure. We make the observation that by evaluating $\MET$ through $\ALT$, we can essentially use the same trick to obtain a cubic-time algorithm to solve \eqref{eqn:DTA} for general measures $\MET$ satisfying the probability ranking principle. Consider the vector $\s \in \{0,1\}^n$ with the top $k$ values set to 1 and the rest to 0, and let $S_{i:j} := \sum_{l=i}^j y_l$. Note that any $\y\in \{0,1\}^n$ that satisfies $S_{1:k} = k_1$ and $S_{k+1:n} = k_2$, $\Psi(\s, \y)$ can simply be evaluated as $\ALT(\frac{1}{n}k_1, \frac{1}{n}k, \frac{1}{n}(k_1 + k_2))$. Thus $\utilDTA(\s,\bbP) = \sum_{\y \in \{0,1\}^n} \bbP(\y|\x)\MET(\s, \y)$ can be evaluated as a sum over possible values of $k_1$ and $k_2$, where the expectation is computed wrt. $P(S_{1:k} = k_1)P(S_{k+1:n} = k_2)$ with $0 \leq k_1 \leq k$ and $0 \leq k_2 \leq n-k$. Now, it remains to compute $P(S_{1:k} = k_1)$ and $P(S_{k+1:n} = k_2)$ efficiently. 

Let $\eta_i = \bbP(Y_i=1 | x_i)$. A consistent estimate of this quantity may be obtained by minimizing a strongly proper loss function such as logistic loss~\cite{reid2009surrogate}. Using the iid assumption on the draw of labels, we can show that $P(S_{1:k} = k_1)$ and $P(S_{k+1:n} = k_2)$ are the coefficients of $z^i$ in $\Pi_{j=1}^k [\eta_j z + (1-\eta_j)]$ and $\Pi_{j=k+1}^n [\eta_j z + (1-\eta_j)]$, each of which can be computed in time $O(n^2)$ for fixed $k$. Note that the metric $\MET$ can be evaluated in constant time. The resulting $O(n^3)$ algorithm is presented in Algorithm \ref{alg:DTAgeneral}. The overall method is as follows:
\begin{enumerate}
\item First, obtain an estimate of $\eta_i = \bbP(Y_i=1 | x_i)$ e.g. via logistic regression.
\item Re-order indices in the descending order of estimated $\eta_i$'s.
\item Then, invoke Algorithm \ref{alg:DTAgeneral} with the sorted $\eta_i$'s to compute $\s^*$.
\end{enumerate}

\begin{algorithm}[tb]
   \caption{Computing $\s^*$ for PRP $\MET$}
\label{alg:DTAgeneral}
\begin{algorithmic}[1]
 \STATE {\bfseries Input:} $\Psi$ and estimates of $\eta_i$ for instances $x_i$ with indices $i = 1,2,\dots,n$ sorted wrt. $\eta_i$
 \STATE Init $s^*_i = 0, \forall i \in [n]$.
 \FOR{$k = 1$ to $n$}
 \STATE For $0 \leq i \leq k$, set $C_k[i]$ as the coefficient of $z^i$ in $\Pi_{i=1}^k\big(\eta_iz + (1-\eta_i)\big)$.
 \STATE For $0 \leq i \leq n-k$, set $D_k[i]$ as the coefficient of $z^i$ in $\Pi_{i=k+1}^n\big(\eta_iz + (1-\eta_i)\big)$.
 \STATE $\Psi_k \leftarrow \hspace{-.3cm}\underset{0 \leq k_2 \leq n-k}{\displaystyle\sum_{0 \leq k_1 \leq k}}C_k[k_1]D_k[k_2]\ALT(\frac{1}{n}k_1, \frac{1}{n}k, \frac{1}{n}(k_1+k_2))$.
 \ENDFOR
 \STATE Set $k^* \leftarrow \arg\max_k \Psi_k$ and $s^*_i \leftarrow 1$ for $i \in [k^*]$.
\STATE return $\s^*$
\end{algorithmic}
\end{algorithm}

\subsection{$O(n^2)$ Algorithm for a Subset of Fractional-Linear Metrics}
We focus our attention on the fractional-linear family of non-decomposable performance metrics studied by \citet{koyejo2014consistent}. Recall that a fractional-linear metric can be represented by $\ALT_{\text{FL}}$ as given in \eqref{eq:ratio}. As shown in in Proposition~\ref{prop:lf}, $\MET_{\text{FL}}$ satisfies \regularity/ when $c_1 > d_1$.
For certain measures in the $\MET_{\text{FL}}$ family, we can get a more efficient algorithm for solving \eqref{eqn:DTA}. In particular, when $c_3 = 0$ in \eqref{eq:ratio}, we can give a quadratic-time procedure for computing $\s^*$ that generalizes the method proposed by~\citet{ye2012} when the constants $\{d_0, d_1, d_2, d_3\}$ are rational. Formally, we consider the sub-family of \regular/ fractional-linear metrics:
\begin{align}
\label{eq:FLspecial}
\{\Psi_{SFL}: \ALT_{FL}(u,v,p) = \frac{c_0 + c_1 u + c_2 v}{d_0 + d_1 u + d_2 v + d_3 p},  \ c_1 > d_1, \text{ and } d_0, d_1, d_2, d_3 \text{ are rational}\}.   
\end{align}
Consider Step 6 of Algorithm \ref{alg:DTAgeneral} for a measure in family \eqref{eq:FLspecial}: \[ \Psi_k \leftarrow \sum_{0 \leq k_1 \leq k}C[k_1](c_0n + c_1k_1 + c_2k)\sum_{0 \leq k_2 \leq n-k} D[k_2]/(d_0n + (d_1+d_3)k_1 + d_2k + d_3k_2).\] Define $b(k, \alpha) = \sum_{0 \leq k_2 \leq n-k} D_k[k_2]/(\alpha + d_3k_2)$. Verify that $b(n, \alpha) = 1/\alpha$. From the fact that $D_{k-1}[i] = \eta_kD_k[i-1] + (1-\eta_k)D_k[i]$, it follows that:
\[ b(k-1,\alpha) = \eta_k b(k, \alpha + d_3) + (1 - p_k)b(k, \alpha).\]
Now, when $d_i$'s are rational, i.e. $d_i = q_i/r_i$,  the above induction can be implemented using an array to store the values of $b$, for possible values of $\alpha$. The resulting $O(n^2)$ algorithm is presented in Algorithm \ref{alg:DTA_FL}. Algorithm \ref{alg:DTA_FL} applies to the $F_\beta$ as well as the Jaccard measure listed in Table \ref{tab:measures}.

\paragraph{Correctness of Algorithm~\ref{alg:DTA_FL}:} When $d_3 \neq 0$, at line 7 of Algorithm~\ref{alg:DTA_FL}, we can verify that  $S[i] = b(k, (i+j_0n)d_3/j_{u,2})$, and therefore at line 9, $S[(j_{u,1} + j_{u,2})k_1 + j_v k] = b(k, (j_{u,1} + j_{u,2})k_1 + j_v k+j_0n)d_3/j_{u,2}) = b(k, (d_1+d_3)k_1 + d_2k + d_0n)$ as desired. When $d_3 = 0$, $b(k, \alpha) = b(k-1, \alpha)$ for all $1 \leq k \leq n$. Let $q_3 = 0$ and $r_3 = 1$. Then, line 5 sets $S[i] = r_0r_1r_2/(i+j_0n)$, line 11 maintains this invariant as $j_{u,2} = 0$ in this case, and therefore at line 9, $S[(j_{u,1} + j_{u,2})k_1 + j_v k] = 1/(d_1k_1 + d_2k + d_0n)$ as desired.

\begin{algorithm}[tb]
   \caption{Computing $\s^*$ for $\MET_{\text{SFL}}$ in the family \eqref{eq:FLspecial}}
\label{alg:DTA_FL}
\begin{algorithmic}[1]
 \STATE {\bfseries Input:} Estimates $\eta_i$ for instances $x_i$, $i = 1,2,\dots,n$ sorted wrt. $\eta_i$, and $c_0, c_1, c_2, d_i = q_i/r_i, i = 0,1,2,3$ corresponding to $\ALT_{\text{SFL}}$
 \STATE Init $s^*_i = 0, \forall i \in [n]$.
 \STATE Set $j_0 \leftarrow r_1r_2r_3q_0,\ j_{u,1} \leftarrow r_0r_2r_3q_1,\  j_{u,2} \leftarrow r_0r_1r_2q_3$,\  $j_v \leftarrow r_0r_1r_3q_2$
 \FOR{$1 \leq i \leq (|j_{u,1}| + |j_{u,2}| + |j_v|)n$}
 \STATE set $S[i] \leftarrow r_0r_1r_2r_3/(i+j_0 n)$.
 \ENDFOR
 \FOR{$k = n$ to $1$}
 \STATE For $0 \leq i \leq k$, set $C_k[i]$ as the coefficient of $z^i$ in $\Pi_{i=1}^k\big(\eta_iz + (1-\eta_i)\big)$.
 \STATE $\Psi_{\text{SFL};k} \leftarrow \hspace{-.3cm}{\displaystyle\sum_{0 \leq k_1 \leq k}}(c_0n + c_1k_1 + c_2k) C_k[k_1]S[(j_{u,1}+j_{u,2})k_1 + j_vk]$.
 \FOR{$i = 1$ to $(|j_{u,1}| + |j_{u,2}| + |j_v|)(k-1)$}
 \STATE $S[i] \leftarrow (1-\eta_k)S[i] + \eta_kS[i+j_{u,2}]$.
 \ENDFOR
 \ENDFOR
 \STATE Set $k^* \leftarrow \arg\max_k \MET_{\text{SFL};k}$ and $s^*_i \leftarrow 1$ for $i \in [k^*]$.
\STATE return $\s^*$
\end{algorithmic}
\end{algorithm}

%

\paragraph{Consistency:} Consider a procedure that maximizes the utility $\utilDTA(\s, \hat{\bbP}(Y|\x))$ computed with respect to a consistent estimate $\hat{\bbP}(Y|\x)$ of the probability $\bbP(Y|\x)$. Here, we show that any such procedure is consistent. The proof is provided in Appendix~\ref{app:consistent}.
\begin{theorem}\label{thm:consistent}
Let $\eta(x_i) = \bbP(Y=1|x_i)$, and assume the estimate $\hat{\eta}(x_i)$ satisfies $\eta(x_i) \convinp \eta(x_i)$. Given a bounded performance metric $\MET$ and a fixed test set of size $n$, let $\s^* = {\arg \max}_{\s \in \{0,1\}^n} \; \utilDTA(\s; \bbP(Y|\x)) $ be the utility optimal prediction with respect to $\bbP$ and $\hat{\s} = {\arg \max}_{\s \in \{0,1\}^n} \; \utilDTA(\s; \hat{\bbP}(Y|\x))$ be the utility optimal prediction with respect to the consistent estimate $\hat{\bbP}(Y|\x)$, then
\[ \utilDTA(\s^*; \bbP) - \utilDTA(\hat{\s}; \bbP) \convinp 0. \]
\end{theorem}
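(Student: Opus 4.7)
The plan is a standard ``uniform convergence over a finite class'' argument, exploiting the fact that for fixed test size $n$, the classifier lives in the finite set $\{0,1\}^n$. First I would add and subtract to decompose the excess utility as
\begin{align*}
\utilDTA(\s^*;\bbP) - \utilDTA(\hat{\s};\bbP)
&= \bigl[\utilDTA(\s^*;\bbP) - \utilDTA(\s^*;\hat{\bbP})\bigr] \\
&\quad + \bigl[\utilDTA(\s^*;\hat{\bbP}) - \utilDTA(\hat{\s};\hat{\bbP})\bigr] \\
&\quad + \bigl[\utilDTA(\hat{\s};\hat{\bbP}) - \utilDTA(\hat{\s};\bbP)\bigr].
\end{align*}
The middle term is $\le 0$ by the definition of $\hat{\s}$ as a maximizer of $\utilDTA(\cdot;\hat{\bbP})$, and the outer difference is nonnegative by the optimality of $\s^*$, so the target quantity is sandwiched between $0$ and the absolute value of the first and third terms.

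Next I would argue pointwise continuity of $\utilDTA(\s;\cdot)$ in $\eta$. For any fixed $\s\in\{0,1\}^n$, the iid label assumption gives
\[
\utilDTA(\s;\bbP(\cdot\mid\x)) \;=\; \sum_{\y\in\{0,1\}^n} \MET(\s,\y)\,\prod_{i=1}^n \eta(x_i)^{y_i}\bigl(1-\eta(x_i)\bigr)^{1-y_i},
\]
which is a polynomial, hence continuous, in $(\eta(x_1),\dots,\eta(x_n))$, and bounded since $\MET$ is bounded. The continuous mapping theorem applied coordinatewise to $\hat{\eta}(x_i)\convinp\eta(x_i)$ therefore yields $\utilDTA(\s;\hat{\bbP}) \convinp \utilDTA(\s;\bbP)$ for each fixed $\s$.

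The one subtlety, and the only place where any real thought is needed, is that in the first and third bracketed terms the classifier depends on the (random) estimate $\hat\eta$, so pointwise convergence is not directly enough. The clean fix is to pass to a uniform bound: since $n$ is fixed, $|\{0,1\}^n| = 2^n$ is finite, so
\[
\max_{\s\in\{0,1\}^n}\bigl|\utilDTA(\s;\bbP) - \utilDTA(\s;\hat{\bbP})\bigr| \;\le\; \sum_{\s\in\{0,1\}^n}\bigl|\utilDTA(\s;\bbP) - \utilDTA(\s;\hat{\bbP})\bigr|,
\]
a finite sum of terms each tending to $0$ in probability by the previous paragraph, hence itself $\convinp 0$. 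This uniform bound dominates both $|\utilDTA(\s^*;\bbP)-\utilDTA(\s^*;\hat{\bbP})|$ and $|\utilDTA(\hat{\s};\hat{\bbP})-\utilDTA(\hat{\s};\bbP)|$ regardless of the randomness in $\hat{\s}$, which combined with the decomposition above yields $\utilDTA(\s^*;\bbP)-\utilDTA(\hat{\s};\bbP)\convinp 0$. The main (and only real) obstacle is this handling of the dependence of $\hat{\s}$ on the sample; finiteness of the hypothesis space for fixed $n$ resolves it without requiring any concentration machinery or smoothness of $\MET$ beyond boundedness.
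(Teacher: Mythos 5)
Your proposal is correct and follows essentially the same route as the paper's proof: add and subtract the utility evaluated at $\hat{\bbP}$, use the optimality of $\hat{\s}$ under $\hat{\bbP}$ to discard the middle term, and control the remaining two terms by a uniform bound over the finite class $\{0,1\}^n$, with pointwise convergence coming from $\hat{\eta}(x_i)\convinp\eta(x_i)$ and boundedness of $\MET$ (the paper phrases this via an $\epsilon$-argument on $|\hat{\bbP}(\y|\x)-\bbP(\y|\x)|$ rather than the continuous mapping theorem, but the content is identical). No gaps.
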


As stated in Theorem~\ref{thm:consistent}, consistency of DTA utility maximization with empirical probability estimates does not depend on PRP. Thus, the consistency results also apply to previous algorithms proposed for $F_{\beta}$ e.g. by \citet{lewis1995evaluating,chai2005expectation,jansche2005maximum,ye2012} that did not include an analysis of consistency with empirical probability estimates. In the special case of \regular/ performance metrics, the following corollary, which follows directly from Theorem~\ref{thm:consistent}, shows that Algorithm \ref{alg:DTAgeneral} and Algorithm \ref{alg:DTA_FL} are consistent.
\begin{corollary}
Assume the estimate $\hat{\eta}(x)$ satisfies $\eta(x) \convinp \eta(x)$ and the performance metric $\MET$ that is \regular/. For a fixed test set of size $n$, let $\hat{\s}$ denote the output of Algorithm \ref{alg:DTAgeneral} (or Algorithm \ref{alg:DTA_FL}, where applicable) using the empirical estimate $\hat{\eta}(x_i)$. Then
\[ \utilDTA(\s^*; \bbP) - \utilDTA(\hat{\s}; \bbP) \convinp 0, \]
where $\s^*$ is the optimal prediction computed with respect to the true distribution $\eta(x_i) = \bbP(Y = 1|x_i)$.
\end{corollary}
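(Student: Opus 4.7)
The plan is to reduce the statement to a uniform convergence argument combined with the standard ``estimated vs. true optimum'' bookkeeping trick. Since the test set has a \emph{fixed} size $n$, the prediction space $\{0,1\}^n$ is finite and we never need to worry about complexity-type uniform convergence over a growing class.

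First I would open up $\utilDTA$ using the i.i.d.\ conditional structure: for any $\s \in \{0,1\}^n$,
\[
\utilDTA(\s; \bbP(\cdot\mid \x)) \;=\; \sum_{\y \in \{0,1\}^n} \Bigl(\prod_{i=1}^n \eta(x_i)^{y_i}(1-\eta(x_i))^{1-y_i}\Bigr) \MET(\s,\y),
\]
which is a finite polynomial (hence continuous) function of the vector $(\eta(x_1),\dots,\eta(x_n)) \in [0,1]^n$; boundedness of $\MET$ keeps the sum finite. Applying the continuous mapping theorem to the assumed convergence $\hat\eta(x_i) \convinp \eta(x_i)$ gives, for each fixed $\s$,
\[
\utilDTA(\s; \hat{\bbP}(\cdot\mid \x)) \;\convinp\; \utilDTA(\s; \bbP(\cdot\mid \x)).
\]

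Next, because $\{0,1\}^n$ is a finite set of cardinality $2^n$ and $n$ is held fixed, I can upgrade pointwise convergence to uniform convergence over $\s$ by taking a union bound (the maximum of finitely many quantities converging in probability to $0$ also converges in probability to $0$):
\[
\Delta_n \;:=\; \max_{\s \in \{0,1\}^n} \bigl|\utilDTA(\s; \hat{\bbP}) - \utilDTA(\s; \bbP)\bigr| \;\convinp\; 0.
\]

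Finally I would invoke the standard three-term decomposition for M-estimation:
\[
0 \;\le\; \utilDTA(\s^*;\bbP) - \utilDTA(\hat\s;\bbP)
\;=\; \underbrace{\utilDTA(\s^*;\bbP) - \utilDTA(\s^*;\hat{\bbP})}_{\le \Delta_n}
\;+\; \underbrace{\utilDTA(\s^*;\hat{\bbP}) - \utilDTA(\hat\s;\hat{\bbP})}_{\le 0}
\;+\; \underbrace{\utilDTA(\hat\s;\hat{\bbP}) - \utilDTA(\hat\s;\bbP)}_{\le \Delta_n},
\]
where the middle term is nonpositive by the definition of $\hat\s$ as the argmax under $\hat{\bbP}$ and the overall nonnegativity comes from $\s^*$ being the argmax under $\bbP$. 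Hence the quantity is sandwiched between $0$ and $2\Delta_n$, and $\Delta_n \convinp 0$ gives the conclusion.

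The only subtle point is making the conditional-expectation-on-$\x$ setup precise. Since the theorem fixes a test set of size $n$, all probabilities and expectations are conditional on $\x$, and the convergence $\hat\eta(x_i) \convinp \eta(x_i)$ should be read in this conditional sense; no part of the argument relies on the test set growing or on any property of the estimator beyond its consistency, which is exactly why the result is agnostic to whether $\MET$ is \regular/ (hence the corollary specializing it to Algorithms~\ref{alg:DTAgeneral} and~\ref{alg:DTA_FL} is immediate once one checks that those algorithms do compute the exact argmax under $\hat{\bbP}$). I do not anticipate a genuine obstacle; the only place requiring care is to keep the boundedness of $\MET$ explicit so the polynomial representation of $\utilDTA$ is bounded uniformly in the plug-in argument, allowing the continuous mapping step to go through cleanly.
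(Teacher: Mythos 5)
Your proposal is correct and follows essentially the same route as the paper's proof of Theorem~\ref{thm:consistent}: the same sandwich decomposition using optimality of $\hat\s$ under $\hat{\bbP}$ and of $\s^*$ under $\bbP$, uniform convergence over the finite set $\{0,1\}^n$ via boundedness of $\MET$ (the paper bounds $|\utilDTA(\s;\bbP)-\utilDTA(\s;\hat{\bbP})|$ through the product form of $\hat{\bbP}(\y|\x)$, which is the same continuity argument you phrase via the continuous mapping theorem), and you correctly note the one point where \regularity/ enters, namely that PRP guarantees Algorithms~\ref{alg:DTAgeneral} and~\ref{alg:DTA_FL} return the exact maximizer of $\utilDTA(\cdot;\hat{\bbP})$, which is precisely how the paper derives the corollary from the theorem.
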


\section{Experiments}
\label{experiments}
We present two sets of experiments. The first is an experimental validation on synthetic data with known ground truth probabilities. The results serve to verify the probability ranking principle (Theorem~\ref{thm:mainresult1}) for some of the metrics in Table \ref{tab:measures}. The second set is an experimental evaluation of DTA optimal classifiers on benchmark datasets, and includes a comparison to EUM optimal classifiers and standard empirical risk minimization with a fixed threshold of $1/2$ -- designed to optimize classification accuracy. 

\subsection{Synthetic data: PRP for general metrics}
We consider four metrics from Table \ref{tab:measures} namely AM, Jaccard, $F_1$ (harmonic mean of Precision and Recall) and G-TP/PR (geometric mean of Precision and Recall) which satisfy PRP from Theorem \ref{thm:mainresult1}. To simulate, we sample a set of ten $2$-dimensional vectors $\x = \{x_1, x_2, \dots, x_{10}\}$ from the standard Gaussian. The conditional probability is modeled using a sigmoid function: $\eta_i = \bbP(Y = 1|x_i) = \frac{1}{1+\exp{-w^Tx_i}}$, for a random vector $w$ also sampled from the standard Gaussian. The optimal predictions $\s^*$ that maximize the DTA objective \eqref{eqn:DTA} are then obtained by exhaustive search over the $2^{10}$ possible label vectors. For each metric, we plot the conditional probabilities (in decreasing order) and $\s^*$ in Figure \ref{fig:synth}. We observe that PRP holds in each case (Algorithms \ref{alg:DTAgeneral} and \ref{alg:DTA_FL} produce identical results; plots not shown).

\begin{figure*}
        \centering
        \subfigure[AM]{\includegraphics[width=0.25\textwidth]{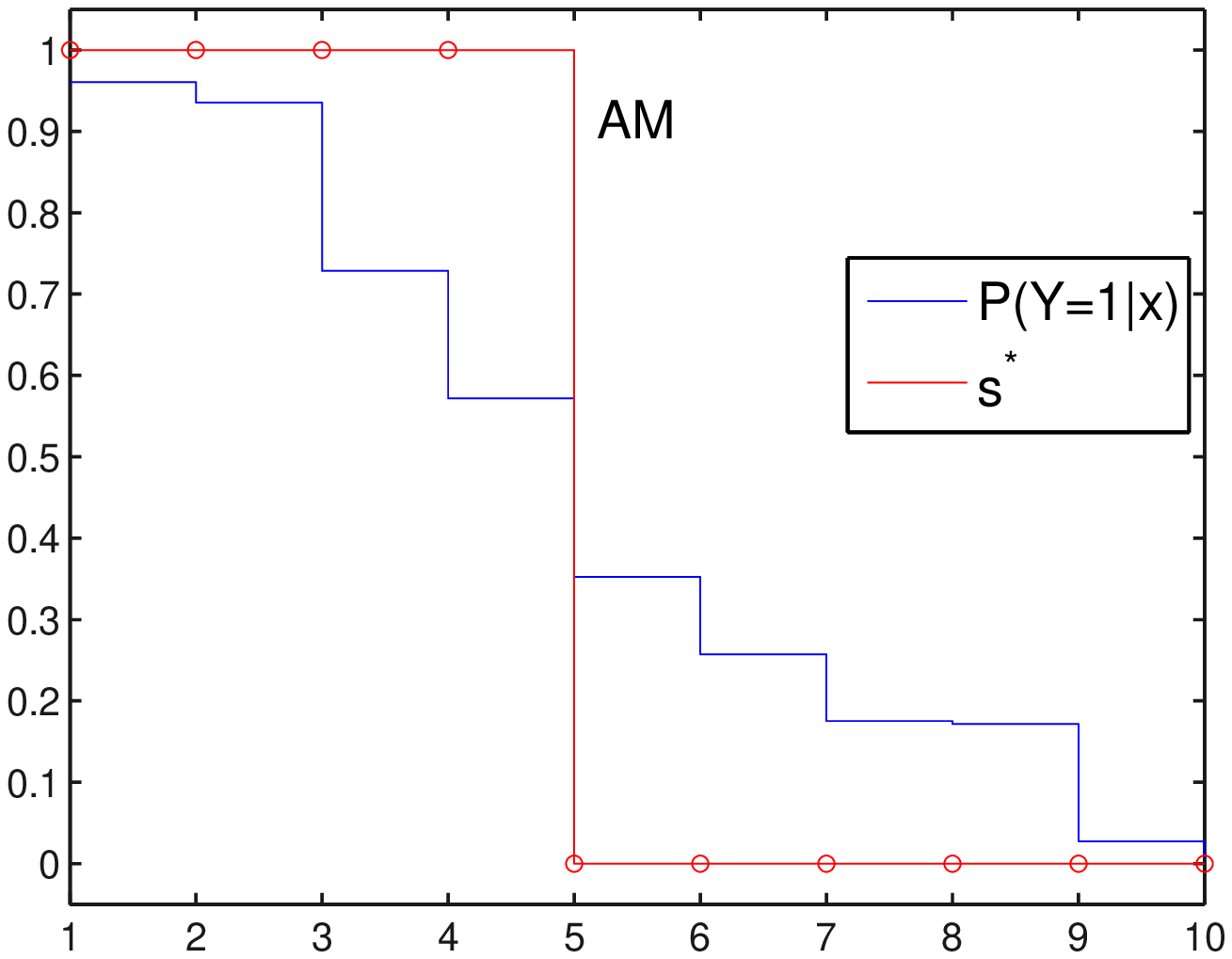}}\hspace{-0.1cm}
        \subfigure[$F_{1}$]{\includegraphics[width=0.25\textwidth]{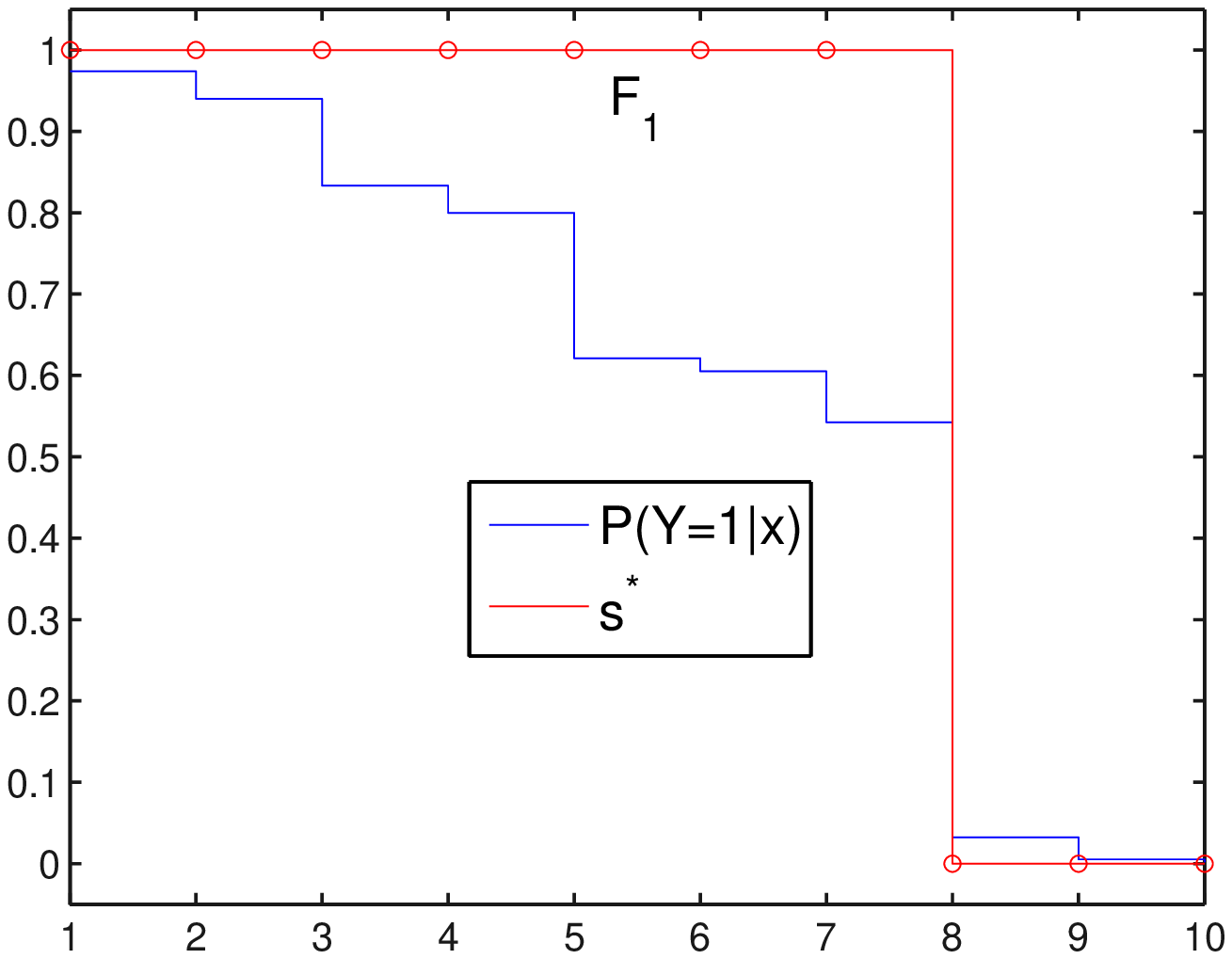}}\hspace{-0.1cm}
        \subfigure[Jaccard]{\includegraphics[width=0.25\textwidth]{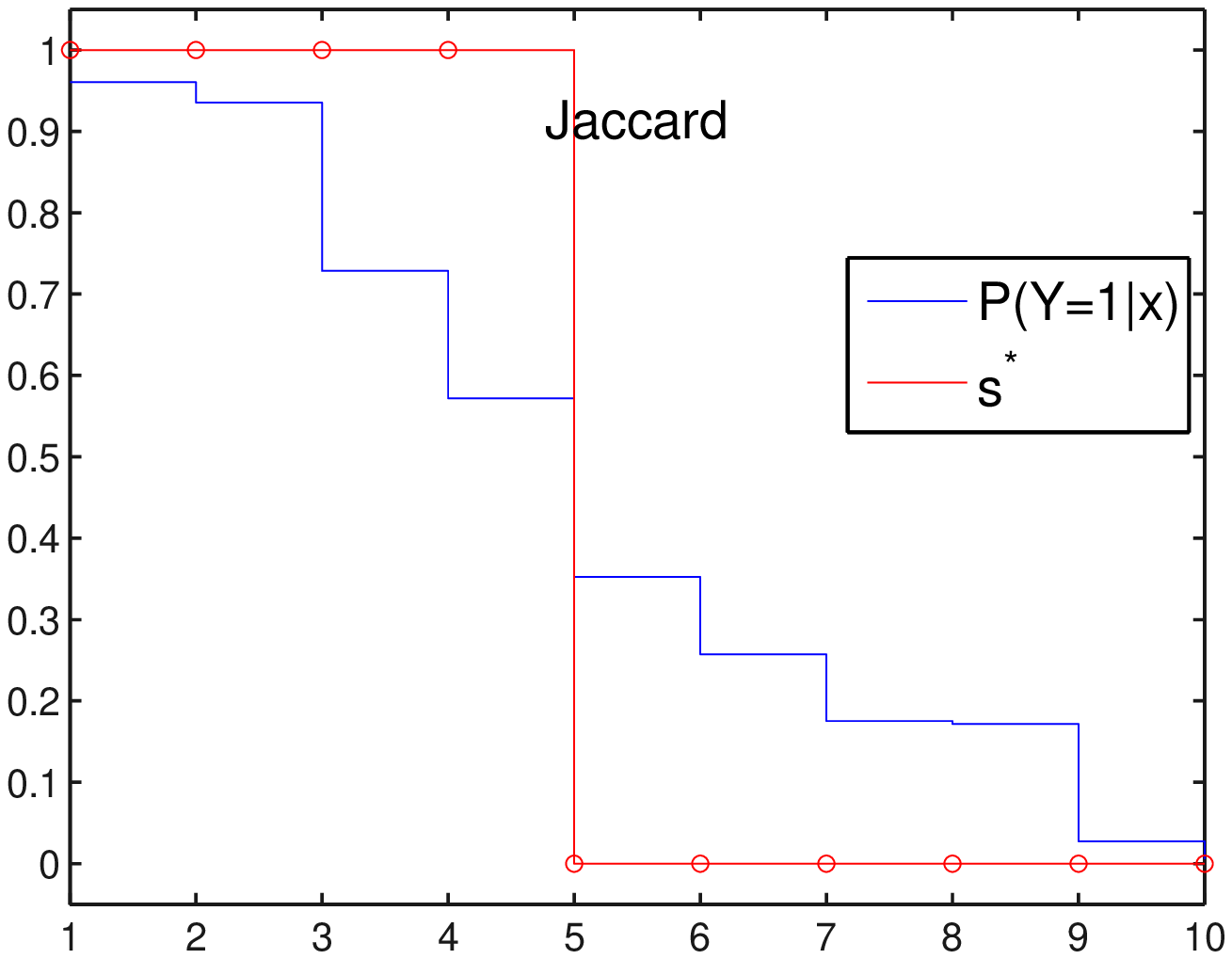}}\hspace{-0.1cm}
        \subfigure[G-TP/PR]{\includegraphics[width=0.25\textwidth]{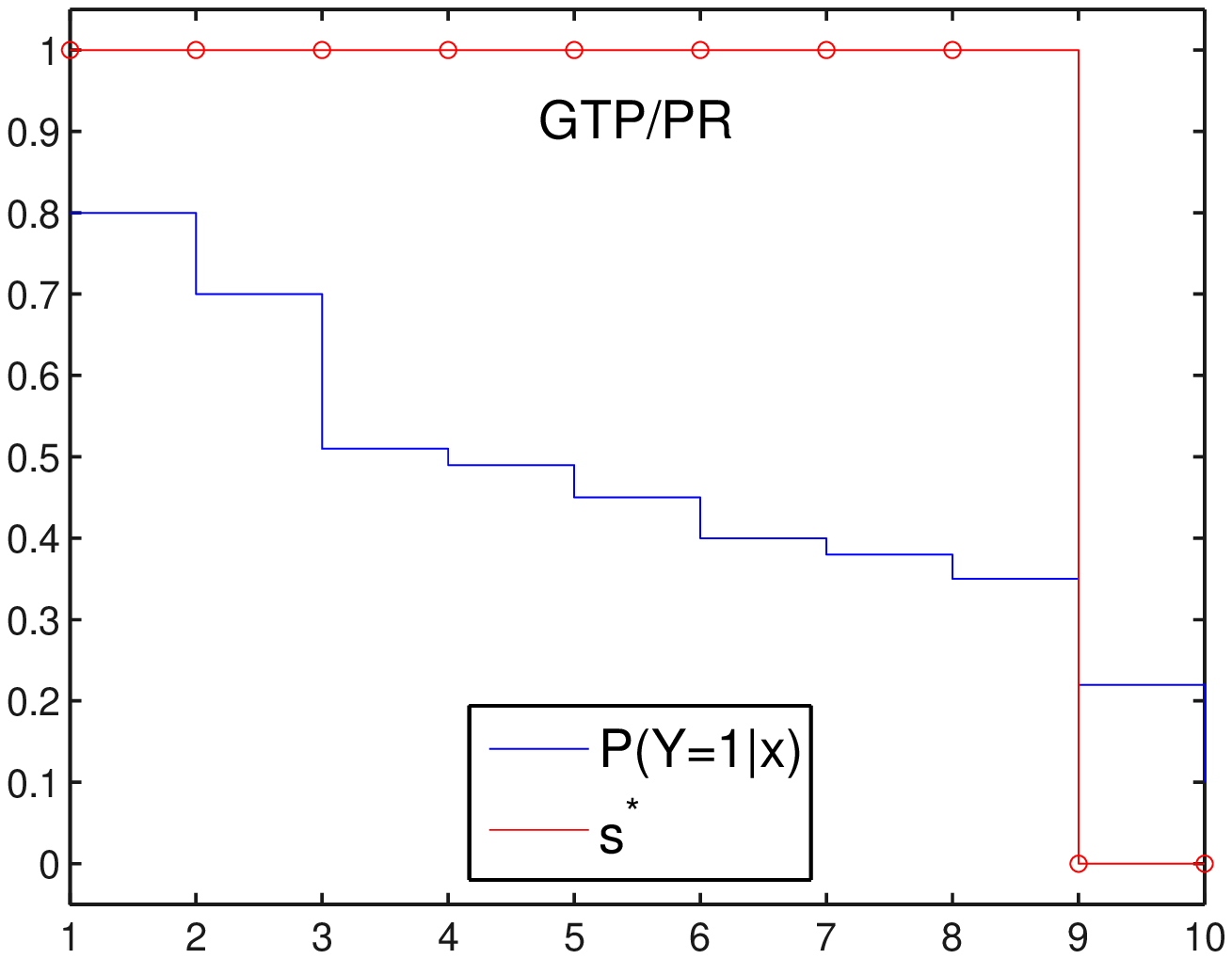}}
\caption{PRP of metrics from Table \ref{tab:measures} demonstrated on synthetic data. In each case, we verify that $\s_i^*$ is obtained by thresholding $\eta(x_i)$ at a fixed value. Furthermore, different measures are optimized at different thresholds on $\x$ from the same distribution $\bbP$.}
\label{fig:synth}
\end{figure*}

\subsection{Benchmark data: Evaluation of the proposed algorithms}
We perform DTA classification using the proposed approach (i) obtain a model for the conditional distribution $\eta(x) = \bbP(Y = 1|x)$ using training data and (ii) compute compute $\s^*$ for the test data using estimated conditionals in the proposed Algorithms \ref{alg:DTAgeneral} and \ref{alg:DTA_FL}. We use logistic loss on the training samples (with $L_2$ regularization) to obtain an estimate $\hat{\eta}(x)$ of $\bbP(Y = 1|x)$. In our experiments, we consider the four performance metrics AM, $F_1$, Jaccard and G-TP/PR. For AM and G-TP/PR we use Algorithm \ref{alg:DTAgeneral}, while for the fractional-linear metrics Jaccard and $F_1$ we use the more efficient Algorithm \ref{alg:DTA_FL}. Let $\y^*$ denote the true labels for the test data. We report the achieved held-out utility $\MET(\confh(\s^*,\y^*))$.

We compare DTA classification using the aforementioned metrics with that of the EUM classifiers using the corresponding metrics as discussed in Remark \ref{rem:EUM}. We use the plugin-estimator method proposed by \citet{koyejo2014consistent} and \citet{narasimhan2014statistical}, where the optimal classifier is given by $\sign(\hat{\eta}(x) - \delta)$. The training data is split into two sets, one set is used for estimating $\hat{\eta}(x)$ and the other for selecting the optimal $\delta$. The predictions are then made by thresholding $\hat{\eta}(x)$ of the test data points at $\delta$. We also compare to the baseline method of thresholding $\hat{\eta}(x)$ at 1/2.

We report results on seven benchmark datasets (used in \cite{koyejo2014consistent,ye2012}). (1) \textsc{Reuters}, consisting of 8293 news articles categorized into 65 topics. Following~\citep{ye2012,koyejo2014consistent}, we present results for averaging over topics that had at least $T$ positives in the training (5946 articles) as well as the test (2347 articles) data; (2) \textsc{Letters} dataset consisting of 20000 handwritten characters (16000 training and 4000 test instances) categorized into 26 letters; (3) \textsc{Scene} (a UCI benchmark dataset) consisting of 2230 images (1137 training and 1093 test instances) categorized into 6 scene types; (4) \textsc{Webpage} binary dataset, consisting of 34780 web pages (6956 train and 27824 test); highly imbalanced, with only about 182 positive instances in the train; (5) \textsc{Image}, with 1300 train and 1010 test images; (6) \textsc{Breast Cancer}, with 463 train and 220 test instances, and (7) \textsc{Spambase} with 3071 train and 1530 test instances\footnote{See~\cite{koyejo2014consistent,ye2012} for more details on the datasets}. The results for $F_1$ and Jaccard metrics (using Algorithm \ref{alg:DTA_FL} for DTA) are presented in Table \ref{fig:table1}. We find that DTA classifier which optimizes for the threshold with respect to the test instances, often improves the utility compared to the baseline or the EUM style of using a threshold selected with training data. The results for AM and G-TP/PR metrics (using Algorithm \ref{alg:DTAgeneral} for DTA) are presented in Table \ref{fig:table2}. In this case, while choosing a threshold other than 1/2 helps, there is no clear winner between the DTA and the EUM approaches. Overall, our results are consistent with the literature which suggests that threshold optimization results in improved performance. DTA utility optimization outperforms the baselines using some metrics, and results in performance comparable to EUM for others. Additional empirical study is planned for future work.


\begin{table*}
\centering
\begin{tabular}{|p{3cm}| c | r | r | r || r | r | r | }\hline
    {\sc Dataset}& T & DTA & Baseline & EUM & DTA & Baseline & EUM   \\
      & & $F_1$ & $F_1$ & $F_1$ & Jaccard & Jaccard & Jaccard  \\ \hline \hline
& 1 & \textbf{0.5875} & 0.5151 & 0.4980 &\textbf{0.4761} & 0.4308 & 0.4257\\
\textsc{Reuters} & 10 & \textbf{0.8247} & 0.7624 &0.7599 & 0.6801 &0.6409 &\textbf{0.6910} \\
(65) & 50 & \textbf{0.8997} & 0.8428 & 0.8510 &0.7515 & 0.7448& \textbf{0.7578}\\
& 100 & \textbf{0.9856} & 0.9675 & 0.9669 & 0.9398 & 0.9375 &0.9357\\ \hline
\textsc{Letters} (26) & 1 & \textbf{0.7110} & 0.4827 & 0.5745 & 0.4272 & 0.3632 & \textbf{0.4318} \\ \hline
\textsc{Scene} (6) & 1 & \textbf{0.9626} & 0.6891 & 0.5916 & \textbf{0.3540} & 0.0206 & 0.2080\\ \hline
\textsc{Web page} & 1 & \textbf{0.8394} &  0.6269 & 0.6267 & 0.4637 & \textbf{0.5215} & 0.5194\\ \hline
\textsc{Spambase} & 1 & \textbf{0.9636} & 0.8798 & 0.8892 &  0.7314 & 0.7867 & \textbf{0.8003} \\ \hline
\textsc{Image} & 1 & \textbf{0.9578} & 0.8571 & 0.8581 & 0.7455 & 0.7500 & \textbf{0.7623} \\ \hline
\textsc{Breast Cancer} & 1 & \textbf{0.9793} & 0.9589 & \textbf{0.9766} & 0.9342 & 0.9211 & \textbf{0.9481} \\ \hline
\end{tabular}
\caption{Comparison of methods: Linear-fractional metrics, $F_1$ and Jaccard. Baseline refers to thresholding $\hat{\eta}(x)$ at $0.5$; DTA refers to the proposed method of computing $\s^*$ using Algorithm \ref{alg:DTA_FL}; and EUM refers to the plugin-estimator method in \citet{koyejo2014consistent}. First three are multi-class datasets (number of classes indicated in parenthesis): metric is computed individually for each class that has at least $T$ positive instances (in both the train and the test sets) and then averaged over classes.}
\label{fig:table1}
\end{table*}

\begin{table*}
\centering
\begin{tabular}{|p{3cm}| c | r | r | r || r | r | r | }\hline
    {\sc Dataset}& T & DTA & Baseline & EUM & DTA & Baseline & EUM   \\
      & & AM & AM & AM & G-TP/PR & G-TP/PR & G-TP/PR  \\ \hline \hline
& 1 & \textbf{0.8834} & 0.7223  & 0.7733 & \textbf{0.7289} & 0.5447 & 0.5299\\
\textsc{Reuters} & 10 & \textbf{0.9520} & 0.8360 & 0.9111& \textbf{0.8066}& 0.7800& \textbf{0.8076}\\
(65) & 50 &\textbf{0.9659} &0.9017 & 0.9582& 0.8495 & 0.8441 &\textbf{0.8691}\\
& 100 & 0.9783 & 0.9761 & 0.9781 & 0.9687 & 0.9675&0.9672\\ \hline
\textsc{Letters} (26) & 1 &\textbf{0.8715} & 0.7020 & \textbf{0.8720} & 0.5787 & 0.5064 & \textbf{0.5902} \\ \hline
\textsc{Scene} (6) & 1 & \textbf{0.5840} & 0.5065  &  \textbf{0.5810} & \textbf{0.5069} & 0.0605 & 0.3848 \\ \hline
\textsc{Web page} & 1 & 0.8689 &  0.8205 &  \textbf{0.8750}& 0.6617 & \textbf{0.6867} & \textbf{0.6886} \\ \hline
\textsc{Spambase} & 1 & 0.8780 & \textbf{0.9010} & \textbf{0.9090} & 0.8494 & 0.8831 & \textbf{0.8913} \\ \hline
\textsc{Image} & 1 & 0.8041 & \textbf{0.8192} &  0.8069 & 0.8676 & 0.8577 & \textbf{0.8702} \\ \hline
\textsc{Breast Cancer} & 1 & 0.9796 & 0.9661 & \textbf{0.9830} & 0.9660 & 0.9590 & \textbf{0.9734} \\ \hline
\end{tabular}
\caption{Comparison of methods: AM and G-TP/PR metrics. Baseline refers to thresholding $\hat{\eta}(x)$ at $0.5$; DTA refers to the proposed method of computing $\s^*$ using Algorithm \ref{alg:DTAgeneral}; and EUM refers to the plugin-estimator method in \citet{narasimhan2014statistical}. First three are multi-class datasets (number of classes indicated in parenthesis): metric is computed individually for each class that has at least $T$ positive instances (in both the train and the test sets) and then averaged over classes.}
\label{fig:table2}
\end{table*}

\section{Conclusions and Future Work}
\label{conclusions}
The goal of this paper is to bridge a gap in the binary classification literature, between empirical utility maximization (EUM) and decision theoretic analysis. In particular, our analysis shows that many popular metrics satisfy a probability ranking principle, so the DTA optimal classifier is given by the signed thresholding of the conditional probability of the positive class. This result matches a similar analysis in the EUM literature. 

We propose a \regularity/ property for metrics, which if satisfied is sufficient to guarantee that the metric satisfies the probability ranking principle. We show that \regularity/ is satisfied by large families of binary performance metrics including the monotonic family studied by \citet{narasimhan2014statistical}, and a large subset of the linear fractional family studied by \citet{koyejo2014consistent}. We also recover known results for the special cases of $F_\beta$ and SEC. We propose efficient and consistent estimators for optimal expected out-of-sample classification. In particular, we show that as a consequence of the probability ranking principle, computational requirements can be reduced from exponential to cubic complexity in the general case, and further reduced to quadratic complexity in special cases. 

The similarity between the DTA optimal and EUM optimal classifiers suggests a more fundamental connection. Indeed, \citet{ye2012} showed that in the special case of $F_{\beta}$, the DTA and EUM optimal classifiers as asymptotically equivalent as the number of test samples tends to infinity. A similar results can be shown for any classifier that satisfies the probability ranking principle. The details of the result will be included in the extended version of this manuscript. For future work, we plan to extend our analysis to multiclass and multilabel classification, to explore if and when the optimal classifiers take a simple form, and to design efficient classification algorithms.
\clearpage
\bibliography{consistent-classifier}
\bibliographystyle{plainnat}

\clearpage
\appendix
\section{Appendix A}
\subsection{Proof of Theorem~\ref{thm:mainresult1}}
\label{app:mainresult1}
The proof is by contradiction. Fix a distribution $\bbP \in \cP$, and let $\x$ denote a set of $n$ iid samples from the marginal $\bbP_{\X}$. Denote $\bbP(Y = 1|x_{i}) = \eta_{i}$ and the optimal classifier by $\s^{*} \in \Y^n$. Suppose there exist indices $j, k$ such that $s^{*}_{j} = 1, s^{*}_{k} = 0$ and $\eta_j < \eta_k$. Let $\s' \in \Y^n$ be such that $s'_{j} = 0$ and $s'_{k} = 1$, but identical to $\s^*$ otherwise i.e. $s^*_i = s'_i \; \forall i \in [n]\backslash\{j, k\}$. Note that $\sum_{i=1}^{n}s^{*}_{i} =\sum_{i=1}^{n}s^{'}_{i}$. 

By optimality of $\s^{*}$ it is clear that,
\begin{equation}
\label{eq:opt}
\utilDTA(\s^*; \bbP) - \utilDTA(\s'; \bbP) \geq 0.
\end{equation}
Consider the LHS, $\utilDTA(\s^*; \bbP) - \utilDTA(\s'; \bbP)$ is equal to:
\begin{align*}
&\sum_{\y \in \Y^{n}}P(\y|\x) [\MET(\s^{*}, \y) - \MET(\s',\y)]\\
= &\sum_{\y \in \Y^{n}: y_{j} \neq y_{k}}P(\y|\x) [\MET(\s^{*}, \y) - \MET(\s',\y)] \\ 
+ &\sum_{\y \in \{0,1\}^{n}: y_{j} = y_{k}} P(\y|\x) \underbrace{[\MET(\s^{*}, \y) - \MET(\s',\y)]}_{(*)} \\
\end{align*}
Note that when $y_{j} = y_{k} = 0$, $\sum_{i=1}^{n}s^{*}_{i}y_{i} = \sum_{i=1}^{n}s'_{i}y_{i}$, so $\MET(\s^{*}, \y) - \MET(\s',\y) = 0$. It follows that the term $(*)$ equals $0$.

Net we apply the representation of Proposition~\ref{prop:tilpsi} with $v(\s) = \frac{1}{n}\sum_{i=1}^n s_i$ and $p(\y) = \frac{1}{n}\sum_{i=1}^n y_i$.  Let $z \in \{0,1\}^{n-2}$ denote the vector corresponding to $n-2$ indices $\{ y_i, \; i \in [n] \setminus \{ j,k \} \}$, then $\utilDTA(\s^*; \bbP) - \utilDTA(\s'; \bbP)$ is given by:
\begin{align*}
\sum_{\y \in \{0,1\}^{n}: y_{j} \neq y_{k}}\bbP(\y|\x) [\MET(\s^{*}, \y) - \MET(\s',\y)] = \\
\sum_{\z \in \{0,1\}^{n-2}}\bbP(\z, y_{j}=1,y_{k}=0|\x) \big[\ALT(\tph(\s^*,\y), v(\s^*), p(\y)) \\ - \ALT(\tph(\s',\y), v(\s'), p(\y))\big] \\
+ \bbP(\z, y_{j}=0,y_{k}=1|\x) \big[\ALT(\tph(\s^*,\y), v(\s^*), p(\y)) \\ - \ALT(\tph(\s',\y), v(\s'), p(\y))\big]
\end{align*}
Let $\tilde{\s} = \{s^*_i \; \forall i \in [n] \setminus \{ j,k \} \} $ and define $\#TP(\z) := \sum_{i}\tilde{s}_iz_i$ and $\#p(\z) = z_{i}$ (where the $\#$ prefix indicates counts rather than normalized values), and note that $v(\s^*) = v(\s')$. With these substitutions, $\utilDTA(\s^*; \bbP) - \utilDTA(\s'; \bbP)$ is given by:
\begin{align*}
\sum_{\z \in \{0,1\}^{n-2}}\bbP(\z, y_{j}=1,y_{k}=0|\x) \\ \bigg[\ALT\bigg(\frac{1}{n}(\#TP(\z)+1), v(\s'), \frac{1}{n}(\#p(\z) + 1)\bigg) - \\ 
\ALT\bigg(\frac{1}{n}\#TP(\z), v(\s'), \frac{1}{n}(\#p(\z) + 1)\bigg)\bigg] \\
+ \bbP(\z, y_{j}=0,y_{k}=1|\x) \\ \bigg[\ALT\bigg(\frac{1}{n}\#TP(\z), v(\s'), \frac{1}{n}(\#p(\z) + 1)\bigg) - 
\\ \ALT(\frac{1}{n}(\#TP(\z) + 1), v(\s'), \frac{1}{n}(\#p(\z) + 1)\bigg)\bigg]
\end{align*}
Next applying the iid assumption on the labels, we have that $P(\z, y_{j},y_{k}|\x) = P(\z|\x)P(y_{j}|\x)P(y_{k}|\x)$, so that the equation further simplifies to:
\begin{align*}
\sum_{\z \in \{0,1\}^{n-2}}\bbP(\z|\x) \bigg[\ALT\bigg(\frac{1}{n}(\#TP(\z)+1), v(\s'), \\ 
\frac{1}{n}(\#p(\z) + 1)\bigg) - \ALT\bigg(\frac{1}{n}\#TP(\z), v(\s'), \frac{1}{n}(\#p(\z) + 1)\bigg)\bigg] \\ 
\bigg[\eta_j(1-\eta_k) - \eta_k(1-\eta_j)\bigg] = \\
(\eta_j - \eta_k) \sum_{\z \in \{0,1\}^{n-2}}\bbP(\z|\x)  \bigg[\ALT\bigg(\frac{1}{n}(\#TP(\z)+1), v(\s'), \\ 
\frac{1}{n}(\#p(\z) + 1)\bigg) - \ALT\bigg(\frac{1}{n}\#TP(\z), v(\s'), \frac{1}{n}(\#p(\z) + 1)\bigg)\bigg]
\end{align*}
Note that for each $\z \in \{0,1\}^{n-2}$:
\begin{itemize}
\item $\ALT\bigg(\frac{1}{n}(\#TP(\z)+1), v(\s'), \frac{1}{n}(\#p(\z) + 1)\bigg)$ can be interpreted as $\MET$ computed on the vectors $\y \in \mathbb{R}^n$ defined as $\{y_i = z_i \; \forall \; i \in [n]\setminus \{j,k\}\} \cup \{y_j = 1\} \cup \{y_k = 0\}$, and $\s^* \in \mathbb{R}^n$ (which is the assumed optimal).
\item $\ALT\bigg(\frac{1}{n}\#TP(\z), v(\s'), \frac{1}{n}(\#p(\z) + 1)\bigg)$ can be interpreted as $\MET$ computed on the vectors $\y \in \mathbb{R}^n$ defined as above and $\s' \in \mathbb{R}^n$.
\end{itemize}
By \regularity/ of $\MET$, for each $\z$, the difference term  $\ALT\bigg(\frac{1}{n}(\#TP(\z)+1), v(\s'), \frac{1}{n}(\#p(\z) + 1)\bigg) - \ALT\bigg(\frac{1}{n}\#TP(\z), v(\s'), \frac{1}{n}(\#p(\z) + 1)\bigg) > 0$. This combined with \eqref{eq:opt} implies that $\eta_j - \eta_k \geq 0$ which is a contradiction.

\subsection{Proof of Proposition~\ref{prop:connection}}
\label{app:connection}
Suppose $\MET$ satisfies \tpregularity/. Let $u_1 = \tp(\s_1, \y_1)$ and $u_2 = \tp(\s_2, \y_2)$, $v = v(\s_1) = v(\s_2)$ and $p = p(\y_1) = p(\y_2)$. Note that $\ALT(u_1,v,p) = \REG(\frac{u_1}{p}, \frac{1-v-p+u_1}{1-p}, p)$ (and similarly equality holds for $\ALT(u_2,v,p)$). Now, whenever $u_1 = \tph(\s_1,\y_1) > \tph(\s_2,\y_2) = u_2$, $v(\s_1) = v(\s_2) = v$, and $p(\y_1) = p(\y_2) = p$, we have $\tprh(\s_1,\y_1) > \tprh(\s_2,\y_2), \tnrh(\s_1,\y_1) > \tnrh(\s_2,\y_2)$, and
\begin{align*}
\ALT(u_1,v,p) &=& \REG(\frac{u_1}{p}, \frac{1 - v - p + u_1}{1 - p}, p) \\
&=& \REG(\tprh(\s_1,\y_1), \tnrh(\s_1,\y_1), p) \\
&\overset{(*)}{>}& \REG(\tprh(\s_2,\y_2), \tnrh(\s_2,\y_2), p) \\
&=& \REG(u_2 . p, \frac{1 - v - p + u_2}{1 - p}, p) \\
&=& \ALT(u_2, v, p)
\end{align*}
where $(*)$ follows from \tpregularity/ of $\MET$. Thus $\MET$ satisfies \regularity/.

\section{Appendix B}
\subsection{Proof of Theorem~\ref{thm:consistent}}
\label{app:consistent}
Let $\,\utilDTA_* := \utilDTA(\s^*; \bbP)$ and let $\,\utilDTAhat = \utilDTA(\hat{\s}; \bbP)$. Also define the empirical distribution: 
\[\hat{\bbP}(\y|\x) = \Pi_{i=1}^n \hat{\eta}(x_i)^{y_i}(1 - \hat{\eta}(x_i))^{1 - y_i}.\]

Now consider:
\begin{align}
\label{eqn:interstep1}
\utilDTA_* - \utilDTAhat &=& \utilDTA_* - \utilDTA(\hat{\s}; \hat{\bbP}) + \utilDTA(\hat{\s}; \hat{\bbP}) - \utilDTAhat \nonumber \\
&\leq& \utilDTA_* - \utilDTA(\s^*; \hat{\bbP}) + \utilDTA(\hat{\s}; \hat{\bbP}) - \utilDTAhat  \nonumber\\
&\leq& 2 \max_{\s} \big| \utilDTA(\s; \bbP) - \utilDTA(\s; \hat{\bbP})\big|
\end{align}

For any fixed $\s \in \{0,1\}^n$, we have:
\begin{align}
\label{eqn:interstep2}
&\big| \utilDTA(\s; \bbP) - \utilDTA(\s; \hat{\bbP})\big| \nonumber \\
&= \big|\sum_{y \in \{0,1\}^n} \hat{\bbP}(\y|\x) \MET(\s, \y) - \sum_{y \in \{0,1\}^n} \bbP(\y|\x) \MET(\s, \y) \big| \nonumber \\
&\leq \sum_{y \in \{0,1\}^n} \big| \hat{\bbP}(\y|\x) - \bbP(\y|\x)\big| \MET(\s, \y)
\end{align}
Let $\eta(x)$ denote the empirical estimate obtained using $m$ training samples. Now because $\hat{\eta}(x) \convinp \eta(x)$, we have that for sufficiently large set of training examples, $\hat{\bbP}(\y|\x) \convinp \bbP(\y|\x)$; i.e. for any given $\epsilon > 0$, there exists $m_\epsilon$ such that for all $m > m_\epsilon$, $|\hat{\bbP}(\y|\x) - \bbP(\y|\x)| < \epsilon$, with high probability. It follows that, with high probability, $\eqref{eqn:interstep2} \leq \epsilon \sum_{y \in \{0,1\}^n} \MET(\s,\y)$. Assuming $\MET$ is bounded, we have that for any fixed $s$, $\big| \utilDTA(\s; \bbP) - \utilDTA(\s; \hat{\bbP})\big| \leq C \epsilon$, for some constant $C$ that depends only on the metric $\MET$ and (fixed) test set size $n$. The uniform convergence also follows because the $\max$ in \eqref{eqn:interstep1} is over finitely many vectors $\s$. Putting together, we have that for any given $\delta, \epsilon' > 0$, there exists training sample size $m_{\epsilon',\delta}$ such that the output $\hat{\s}$ of our procedure satisfies, with probability at least $1 - \delta$, $\utilDTA_* - \utilDTAhat < \epsilon'$. The proof is complete.

\end{document}